\documentclass[11pt]{article}

\usepackage{styles/preprint}

\ShortHeadings{Assessing the Creativity of LLMs: Testing, Limits, and New Frontiers}{Schapiro et al.}
\firstpageno{1}

\usepackage{hyperref}
\usepackage{wrapfig}

\usepackage{graphicx}
\usepackage{subcaption}
\usepackage{multirow}
\usepackage{amsmath}
\usepackage{cleveref}
\usepackage{amsfonts}
\usepackage{natbib}
\usepackage{longtable}
\usepackage{etoc}  

\usepackage{algorithm}
\usepackage{algorithmic}

\newcommand{\good}[1]{{\boldmath\textbf{#1}}}
\newcommand{\bad}[1]{{\boldmath\textbf{#1}}}

\NewDocumentCommand{\alexi}
{ mO{} }{\textcolor{red}{\textsuperscript{\textit{Alexi}}\textsf{\textbf{\small[#1]}}}}

\newtcolorbox{promptbox}[1]{
  enhanced,
  breakable,
  title=#1,
  colback=gray!5,
  colframe=gray!55!black,
  colbacktitle=gray!55!black,
  coltitle=white,
  fonttitle=\bfseries\small,
  fontupper=\ttfamily\small,
  boxrule=0.5pt,
  arc=2pt,
  left=6pt, right=6pt, top=4pt, bottom=4pt,
}
\newtcolorbox{plainpromptbox}{
  enhanced,
  breakable,
  colback=gray!5,
  colframe=gray!55!black,
  fontupper=\ttfamily\small,
  boxrule=0.5pt,
  arc=2pt,
  left=6pt, right=6pt, top=4pt, bottom=4pt,
}

\title{Assessing the Creativity of Large Language Models: \\Testing, Limits, and New Frontiers}

\author{%
  \textbf{Samuel Schapiro}\quad
  \textbf{Alexi Gladstone}\quad
  \textbf{Jonah Black}\quad
  \textbf{Heng Ji}\\[3pt]
  \normalfont\footnotesize\color{prMuted}University of Illinois Urbana-Champaign%
}

\preprintdate{\today}
\correspondence{Samuel Schapiro (\href{mailto:sjs17@illinois.edu}{sjs17@illinois.edu})}
\creativeai{https://schapiro.ai/creative-ai-index/}
\creativeailogo{creative_ai_index_logo.png}
\graphicspath{{./}{./figures/}}

\begin{document}

\maketitle

\begin{abstract}
Measuring the creativity of large language models (LLMs) is essential for designing methods that can improve creativity and for enhancing our scientific understanding of this ability. To accomplish this, it has become common in recent years to administer tests of human creativity to LLMs. Broadly, these tests assess one of two abilities: \emph{convergent thinking}, the search for a single correct answer to a constrained problem, and \emph{divergent thinking}, the generation of many unique responses to an open-ended task. Although these tests provide a convenient and fully automated way to score ``creativity,'' their validity as measures of \emph{machine} creativity has not been established, and these tests already have limited validity as predictors of human creativity. To address this problem, we conduct the first large-scale, systematic study assessing the effectiveness of human creativity tests for predicting the creative achievement of LLMs across three target constructs: creative writing, divergent thinking, and scientific ideation. We find that the Divergent Association Task (DAT) and the Conditional DAT are the best predictors of creative writing and divergent thinking, respectively, but that test effectiveness varies significantly by construct, and no single test predicts all constructs well. Moreover, contrary to popular belief, \emph{no existing test reliably predicts scientific ideation ability.} Motivated by this problem, we introduce the \textbf{Divergent Remote Association Test} (DRAT), a vocabulary-space test that assesses both convergent and divergent thinking in a single instrument. The DRAT is the first and only creativity test for LLMs that is a significant predictor of scientific ideation ability, demonstrating robustness across major design choices. Furthermore, the performance gain of the DRAT is not recoverable from any linear combination of the Divergent Association Task and the Remote Associates Test, indicating that assessing divergent and convergent thinking in the same test is \emph{essential} to reliably predicting scientific ideation ability.

\end{abstract}

\etocdepthtag.toc{mainmatter}

\section{Introduction} \label{sec:intro}

Evaluating the creativity of large language models (LLMs) is essential for developing methods that improve creativity, advancing our scientific understanding of this ability, and ensuring robust deployment of AI in human co-creativity environments. In recent years, it has become common practice to re-purpose psychological assessments of \emph{human} creativity to assess how well LLMs perform on such tasks \citep{Chen2023ProbingAssociation,Bellemare-Pepin2024DivergentLLMs, wang2025large}, and then to make comparisons between  human and machine creativity based on the results \citep{Stevenson2022PuttingTest, cropley2023artificial}. Broadly, such tests assess one of two mechanisms implicated in the creative process: \textit{divergent thinking} is the ability to generate multiple distinct responses to an open-ended question, while \textit{convergent thinking} is the ability to generate a single response that unifies multiple diverse stimuli \citep{Dietrich2004TheCreativity, Dietrich2019TypesCreativity, bvsr_update}. 

\begin{figure*}[t]
\centering
\includegraphics[width=\textwidth]{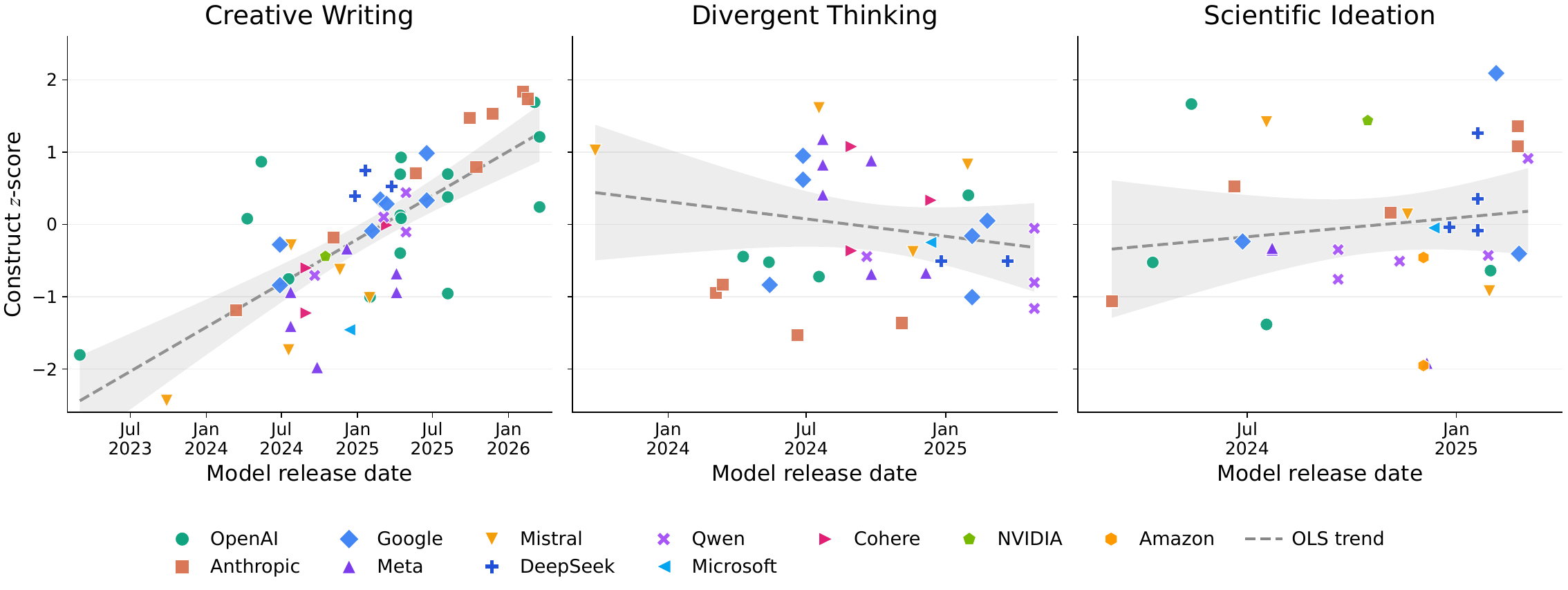}
\caption{\textbf{The creativity of LLMs over time.} Each marker is one LLM, plotted at its public release date and colored by provider. The $y$-axis is the model's construct-level score, where each underlying benchmark is $z$-scored across the model pool, and the intra-construct mean is reported. The dashed grey line is the linear regression trend through all points. Creative writing and scientific ideation have risen steadily at $+1.22$ and $+1.00$ z/year, respectively, while divergent thinking has declined at $-0.47$ z/year. Newer models are not automatically superior at divergent thinking---\emph{in fact, they tend to perform worse.}}
\label{fig:creativity_over_time}
\end{figure*}

Central to divergent thinking is the ability to associate semantically distant concepts, which has long been considered fundamental to human creativity \citep{Mednick1962TheProcess., bvsr_update, Thagard2012CreativeInvention}. This has motivated the use of divergent thinking tests like the Divergent Association Task (DAT) \citep{Olson2021NamingCreativity,Chen2023ProbingAssociation,cropley2023artificial,Bellemare-Pepin2024DivergentLLMs,wang2025large} to assess whether LLMs are capable of making distant associations. On the DAT, subjects are instructed to generate ten maximally dissimilar nouns, and scores are given by the mean pairwise semantic distance of all word pairs under an embedding model such as GloVe~\citep{pennington2014glove}. The DAT can be scored in an automated fashion, providing a convenient way to assess creativity without requiring human raters. Other divergent thinking tests include the Alternative Uses Test \citep{Guilford1956PsychologicalINTELLECT, Stevenson2022PuttingTest} and the Torrance Tests of Creative Thinking \citep{ttct}, each of which is scored subjectively using human ratings. Over the last year, two novel divergent creativity tests have also been proposed. The Conditional DAT \citep{nakajima2026beyond} extends the DAT by measuring the relevance of each noun to a given ``cue'' word, incorporating a measure of appropriateness in addition to novelty \citep{Boden2004TheMechanisms, Maher2010EvaluatingSystems}. Second, the Parallel Association Chain Evaluation (PACE; \citet{pace}) is inspired by the forward-flow measure \citep{Gray2019ForwardCreativity., Beaty2021ForwardThinking} and instructs models to make free associations starting from several seed words, scoring responses by a sequential semantic distance measure. 

Convergent thinking, on the other hand, is commonly assessed by the Remote Associates Test \citep{Mednick1962TheProcess., bowden2003normative}, which presents three stimulus words, and asks for a single word that connects all three (e.g., given \emph{cottage, swiss}, and \emph{cake}, the response ``cheese'' applies to each stimulus). Various other convergent creativity assessments and benchmarks for LLMs have been introduced in recent years, including the Only Connect Wall test \citep{Naeini2023LargeDataset} and the CresOWLve benchmark \citep{cresowlve}.

Although these creativity tests have been widely applied to assess LLMs, the validity of such tests as measures of \emph{machine} creativity has not been established, and human creativity tests already have limited validity as predictors of creativity among humans (see \Cref{sec:motivating_problems}). Furthermore, the PACE test shows strong Spearman rank correlations with creative writing benchmarks ($\rho \approx 0.74$), but also correlates strongly with general model capabilities ($\rho \approx 0.66$), so it is unclear how well PACE measures creative achievement \emph{independent} of what general capability already predicts \citep{pace}. This risk of conflating general capability with creativity applies not just to PACE, but to \emph{any} LLM creativity test. The field has yet to conduct a rigorous, large-scale study assessing whether human creativity tests actually measure creative outcomes in LLMs, and whether they do so in a way that is statistically independent of what general model capabilities already predict. The need for such a study is exacerbated by the fact that creative abilities have not advanced uniformly as LLMs have improved. \Cref{fig:creativity_over_time} shows that while creative writing and scientific ideation scores have risen at +1.22 and +1.00 z-score per year, respectively, divergent thinking has actually declined (-0.47 z/year).

\paragraph{Our Contributions}
To address this problem, we carry out the first systematic study assessing the effectiveness of automated creativity tests for predicting creative achievement in LLMs. We measure creative achievement using six benchmarks that capture three target constructs: (i) creative writing, (ii) divergent thinking, and (iii) scientific ideation. After finding that correlations between benchmarks and general capabilities are as high as $r = 0.98$, we introduce two evaluation criteria: \emph{validity}, which measures the raw correlation $r$ of a test with each benchmark, and \emph{specificity}, the semi-partial correlation $r|g$ after residualizing benchmark scores on a capability proxy $g$. The latter assesses how well creativity tests predict aspects of creative achievement after variance explained by general capabilities is factored out, addressing the risk of conflating general capability with creativity measurement. Then, after finding that existing tests do not reliably predict scientific ideation ability, we introduce a novel creativity test that assesses divergent and convergent thinking simultaneously, and is a significant predictor of scientific creativity. In detail, we make the following contributions:
\begin{enumerate}
    \item We conduct a large-scale, systematic study evaluating the effectiveness of human creativity tests for predicting the creative achievement of LLMs. 
    \item We employ two metrics which measure a test's predictive power, both in raw correlations with benchmarks (\textit{validity}) and independent of what general capabilities already predict (\textit{specificity}). By evaluating specificity in addition to validity, we find that the Parallel Association Chain Evaluation (PACE) test is effectively a proxy for general capabilities---its highly significant validity on creative writing ($r \approx 0.73$) collapses to non-significant specificity ($r|g \approx 0.15$).
    \item We prove an upper bound on the maximum attainable validity and specificity a creativity test can achieve, and find that existing creativity tests are far below this frontier on nearly all benchmarks.
    \item Our empirical findings indicate that the Divergent Association Task (DAT) is the best predictor of creative writing, the Conditional Divergent Association Task (CDAT) is the best predictor of divergent thinking, and in contrast to general beliefs, \textit{none of the existing tests is a reliable predictor of scientific ideation ability.}
    \item We propose the \emph{Divergent Remote Association Test} (DRAT), a hybrid of the Remote Associates Test and the Divergent Association Task that bridges convergent and divergent thinking measures into a single vocabulary-space test. DRAT is the first test to achieve significant validity ($r = +0.57$, $p \approx 0.008$) and specificity ($r|g = +0.50$, $p \approx 0.02$) in predicting scientific creativity.\footnote{Correlations and p-values are reported as the average over multiple embeddings to ensure robustness, as specified in \Cref{sec:drat}.}
    \item We find that performance gains from the DRAT are not recoverable from any linear combination of the Divergent Association Task and Remote Associates Test, indicating that assessing divergent and convergent thinking in the same test is \emph{essential} to reliably predicting scientific ideation ability
\end{enumerate}

Overall, our findings provide practical guidance on which constructs each test is suited to predict, characterize the theoretical headroom for future tests, and propose a new test that advances our ability to predict scientific creativity in LLMs.

\paragraph{Outline.} In \Cref{sec:background}, we provide background and motivating problems, introduce the creativity tests and benchmarks we study in this work, describe our evaluation criteria, and prove an upper bound for such criteria. Then, \Cref{sec:results} reports our comprehensive evaluation of creativity tests across three constructs, where we find existing tests fail to reliably predict scientific ideation. \Cref{sec:drat} addresses this problem by introducing the Divergent Remote Association Test (DRAT), a novel creativity test which reliably predicts scientific ideation ability. \Cref{sec:discussion} discusses broader implications of our work, \Cref{sec:limitations} shares limitations of the present study, and \Cref{sec:conclusion} concludes.

\section{Background and Framework} \label{sec:background}
In this section, we start by discussing motivating problems in \Cref{sec:motivating_problems}, then introduce the creativity tests (\Cref{sec:sd_tests}) and benchmarks (\Cref{sec:benchmarks}) we study. Afterwards, in \Cref{sec:validity-specificity}, we describe our evaluation criteria, and in \Cref{subsec:val_spec_frontier}, we prove an upper bound on the maximum attainable values each criterion can achieve.

\subsection{Motivating Problems} \label{sec:motivating_problems}

\begin{wrapfigure}[13]{r}{0.5\textwidth}
\vspace{-5pt}
\centering
\includegraphics[width=0.5\textwidth]{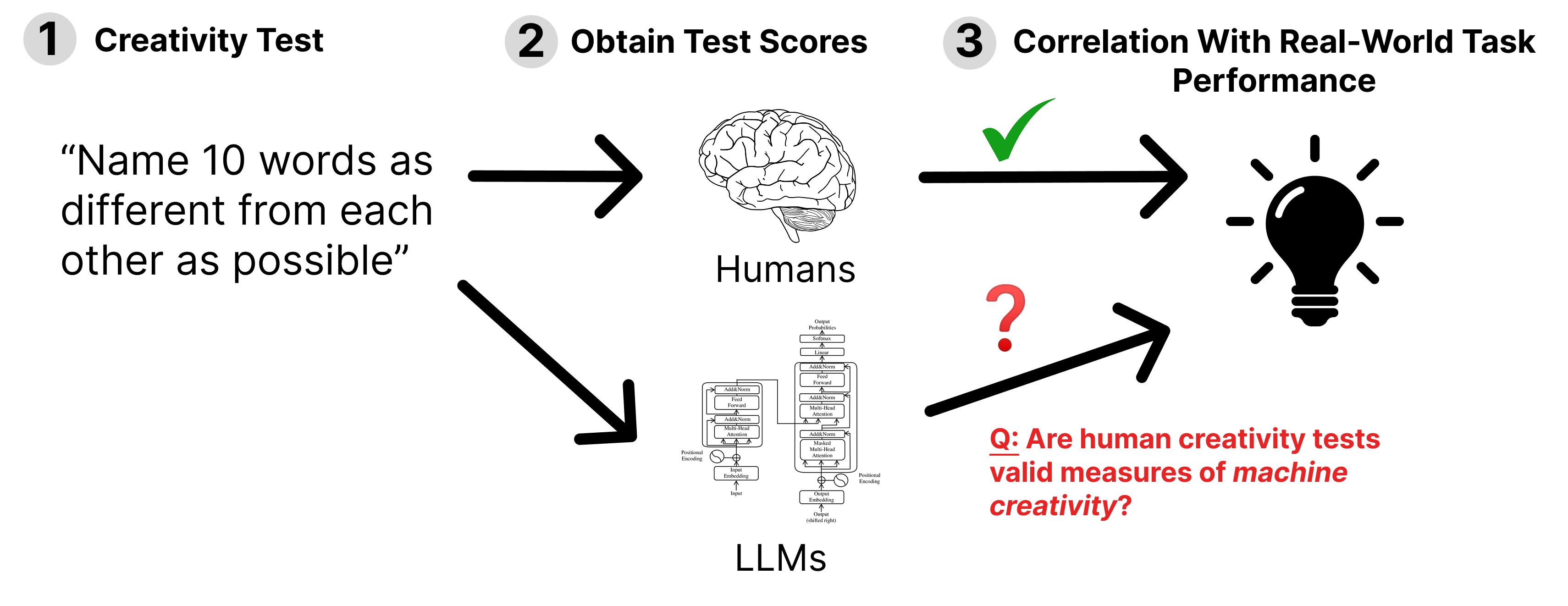}
\caption{\textbf{Overview.} Human creativity tests like the Divergent Association Task are typically used to assess the ``creativity'' of LLMs. While the external validity of these tests among humans has been supported by modest correlations with ideation tasks,  their validity as measures of \emph{machine} creativity has not been adequately established.}
\label{fig:overview}
\end{wrapfigure}

In recent years, creativity tests originally designed for humans have been widely applied to assess the ``creativity'' of large language models (LLMs) \citep{Chen2023ProbingAssociation, cropley2023artificial, Bellemare-Pepin2024DivergentLLMs}. Various strategies, such as varying sampling parameters, instructing LLMs to assume the personas of distinguished creative individuals \citep{wang2025large}, prompt engineering, and offering explicit test-taking strategies \citep{Bellemare-Pepin2024DivergentLLMs}, have all been explored to improve creativity test scores.

Moreover, these creativity tests are already being used to make direct claims that LLMs are \emph{more} or \emph{less} creative than humans~\citep{Chen2023ProbingAssociation,Bellemare-Pepin2024DivergentLLMs,wang2025large}, which presupposes that the tests measure the same construct in both populations. However, this is problematic for several reasons.

For one, \textbf{the validity of human creativity tests as measures of \emph{machine} creativity has not been established.} Measures of human intelligence are often inappropriate to administer naively to LLMs~\citep{chollet2019measure} because those same tests do not inherently have construct validity for machines.\footnote{The digit span test is commonly used to assess human intelligence \citep{iq_tests}, but would be inappropriate to administer to machines, where working memory is not a bottleneck to intelligence the way it is among humans. Analogously, creativity tests can be ``hackable'' by computational mechanisms that would not be considered ``creative'' \citep{Boden2004TheMechanisms}. Consider that Transformers have been shown to implement algorithmic circuits in their weights \citep{olsson2022induction,nanda2023grokking}, and that the DAT itself can be trivially solved by a simple algorithm over embedding distances that exceeds mean human and LLM scores (Appendix~\ref{app:greedy}).}  A psychometric test is a valid measure of creativity if scores correlate strongly with quantitative measures of creative achievement \emph{within that same population}. Human creativity tests were designed for human populations, and their external validity as measures of \emph{machine} creativity has not yet been established.\footnote{Moreover, these tests may be flawed due to training data leakage, as has been observed with the AUT \citep{Stevenson2022PuttingTest}}

Furthermore, \textbf{the validity of many creativity tests as measures of \emph{human} creativity is itself only modestly established.} Even within human populations, the external validity of many tests is only supported by correlations with real-world creative achievement. For example, the DAT \citep{Olson2021NamingCreativity} and forward flow\footnote{PACE \citep{pace} uses the same scoring mechanism as forward flow.} \citep{Gray2019ForwardCreativity.} are validated by correlation with the Alternative Uses Test (AUT): the DAT at $r \approx .32$--$.51$~\citep{Olson2021NamingCreativity}\footnote{The DAT is also validated against the Bridge-the-Associative-Gap Test \citep{bag_test}, although this is a measure of convergent thinking.} and forward flow at $r \approx .43$--$.49$~\citep{Beaty2021ForwardThinking}. However, meta-analyses have revealed that AUT scores themselves correlate with self-reported creative achievement at only $r \approx .17$ \citep{said2024divergent} to $r \approx .22$~\citep{kim2008meta}, and \citet{baer2011ttct} reports near-zero correlations between AUT scores and creative achievement, casting doubt on external validity claims for such tests in predicting creative outcomes among humans.

\begin{figure}[!tb]
\centering
\begin{examplebox}{Example prompts and responses}
\textbf{DAT.} Generate 10 nouns maximally different from each other.

\textcolor{prBadge}{\textit{\textbf{Response:}}} ocean, mathematics, hammer, justice, molecule, symphony, volcano, laughter, friction, taxonomy

\vspace{15pt}

\textbf{CDAT.} Generate 10 nouns associated with \textbf{``rock''} that are maximally different from each other.

 \textcolor{prBadge}{\textit{\textbf{Response:}}} stone, guitar, music, geology, cliff, mineral, foundation, cradle, concert, pebble

\vspace{15pt}

\textbf{PACE.} Starting with the seed \textbf{``rock''}, produce three 20-word association chains in which each word associates only with the previous word.

\textcolor{prBadge}{\textit{\textbf{Response (chain 1 of 3):}}} rock $\to$ stone $\to$ pebble $\to$ beach $\to$ sand $\to$ hourglass $\to$ time $\to$ clock $\to$ alarm $\to$ fire $\to$ smoke $\to$ cigarette $\to$ tobacco $\to$ leaf $\to$ tree $\to$ bark $\to$ dog $\to$ collar $\to$ shirt

\vspace{15pt}

\textbf{RAT.} Given three remote stimulus words \textbf{``cottage''}, \textbf{``swiss''}, \textbf{``cake''}, name the single word that connects all three.

\textcolor{prBadge}{\textit{\textbf{Response:}}} cheese (cottage cheese, swiss cheese, cheesecake)
\end{examplebox}
\caption{\textbf{Example prompts and responses for each creativity test.} The DAT prompts for maximally distant nouns; the CDAT prompts for maximally distant nouns that are each relevant to a cue; PACE prompts models to freely associate sequences of nouns in a 20-word chain; and RAT presents three remote stimulus words and asks for the single word that connects all three.}
\label{fig:examples}
\end{figure}

Finally, we argue that \textbf{creativity tests should measure something independent of what general capability already predicts.} A creativity test that correlates with, e.g., creative writing rankings, does not, on its own, establish that the test measures ``creativity.'' A creativity test should predict aspects of creative achievement in ways that are statistically \emph{independent of} what general capability already predicts---otherwise it reduces to a measure of general capability rather than creativity. Among human populations, where divergent thinking test scores are known to correlate with general intelligence up to $r = 0.37$ \citep{gerwig2021relationship}, the standard practice is to factor out confounding variables (e.g., through hierarchical regression or factor analysis) before claiming a test measures ``creativity'' \citep{Beaty2021ForwardThinking}. We address each of these concerns in the framework introduced in \Cref{sec:validity-specificity} and apply it to the empirical evaluation in \Cref{sec:results}.

\subsection{Creativity Tests} \label{sec:sd_tests}

\paragraph{Divergent Association Task (DAT)}
Introduced by \citet{Olson2021NamingCreativity}, the DAT asks subjects to name 10\footnote{Usually, only the first seven valid nouns are scored.} nouns $W = \{w_1, \dots, w_n\}$ as different from each other as possible, and scores the average cosine distance between all word pairs:
\begin{equation} \label{eq:dat}
    \textrm{DAT}_\mathcal{E}(W) := \tfrac{100}{n(n-1)} \sum_{i \neq j}^{N} (1 - \cos_\mathcal{E}(\mathbf{w}_i, \mathbf{w}_j))
\end{equation}
In addition to computing scores under the standard GloVe 840B embeddings~\citep{pennington2014glove} used in \citet{Olson2021NamingCreativity}, we test robustness under multiple embedding models in~\Cref{sec:results}.

\paragraph{Conditional Divergent Association Task (CDAT)} Creative artifacts should be both novel \emph{and} useful \citep{Varshney2019MathematicalCreativity,Boden2004TheMechanisms,Simonton2004CreativityZeitgeist,Maher2010EvaluatingSystems}. \citet{nakajima2026beyond} argued that the original DAT failed to adequately measure creative utility, and proposed the conditional DAT (CDAT), in which the DAT is administered, but each word must be sufficiently relevant to the cue word $c$ in order for its novelty to count. For words $W = \{w_1, \dots, w_n\}$ generated for cue $c$, novelty and appropriateness are measured according to:
\begin{align} \label{eq:cdat-novelty}
    \textrm{CDAT-N}_\mathcal{E}(W) &:= \tfrac{100}{n(n-1)} \sum_{i\neq j}^{N} (1 - \cos_\mathcal{E}(\mathbf{w}_i, \mathbf{w}_j)); \\
    \textrm{CDAT-A}_\mathcal{E}(W \mid c) &:= \tfrac{100}{n} \sum_{i=1}^{n} \cos_\mathcal{E}(\mathbf{c}, \mathbf{w}_i)
\end{align}
Note that the CDAT-N is equivalent to Equation~\eqref{eq:dat}, but that CDAT-N and DAT scores differ due to cue-based prompting under the CDAT.

\paragraph{Parallel Association Chain Evaluation (PACE)}
PACE~\citep{pace}, inspired by forward-flow associative-chain methods~\citep{Gray2019ForwardCreativity.}, prompts models to produce three parallel 20-word association chains from a seed, and the score is the mean cumulative cosine distance along each chain. For a chain $C = (w_1, \dots, w_L)$:
\begin{equation}
    \textrm{PACE}_\mathcal{E}(C) := \tfrac{1}{L-1} \sum_{i=2}^{L} \tfrac{1}{i-1} \sum_{j=1}^{i-1} (1 - \cos_\mathcal{E}(\mathbf{w}_i, \mathbf{w}_j))
\end{equation}
The model-level score is the mean of $\textrm{PACE}_\mathcal{E}(C)$ across three chains per seed and across all seeds. In the original paper, PACE correlated with Chatbot Arena CW at $\rho \approx 0.74$ with $n=30$. The metric was originally tested under FastText embeddings~\citep{mikolov2018fasttext}, although \citet{pace} also saw significant rank correlations with Chatbot Arena CW under multiple distinct embedding models.

\paragraph{Remote Associates Test (RAT)}
The Remote Associates Test~\citep{Mednick1962TheProcess.,bowden2003normative} is a long-standing psychometric instrument for assessing the convergent thinking component of creativity. The RAT presents three remote stimulus words and asks for the single word that connects all three. For example, given $(\text{cottage}, \text{swiss}, \text{cake})$, the answer is \emph{cheese} (\emph{cottage cheese}, \emph{swiss cheese}, \emph{cheesecake}). Unlike previous tests, where scoring is based on embedding distance, under the RAT, items are scored either as correct or incorrect against a normed answer key. We use a 30-item subset of the \citet{bowden2003normative} stimulus set and  score each model's ability to produce the correct response in a zero-shot fashion.

\subsection{Creative Achievement Benchmarks} \label{sec:benchmarks}
We evaluate each creativity test against three target constructs (creative writing, divergent thinking, and scientific ideation) spanning six large-scale benchmarks.

\begin{figure*}[!tb]
\centering
\includegraphics[width=0.85\textwidth]{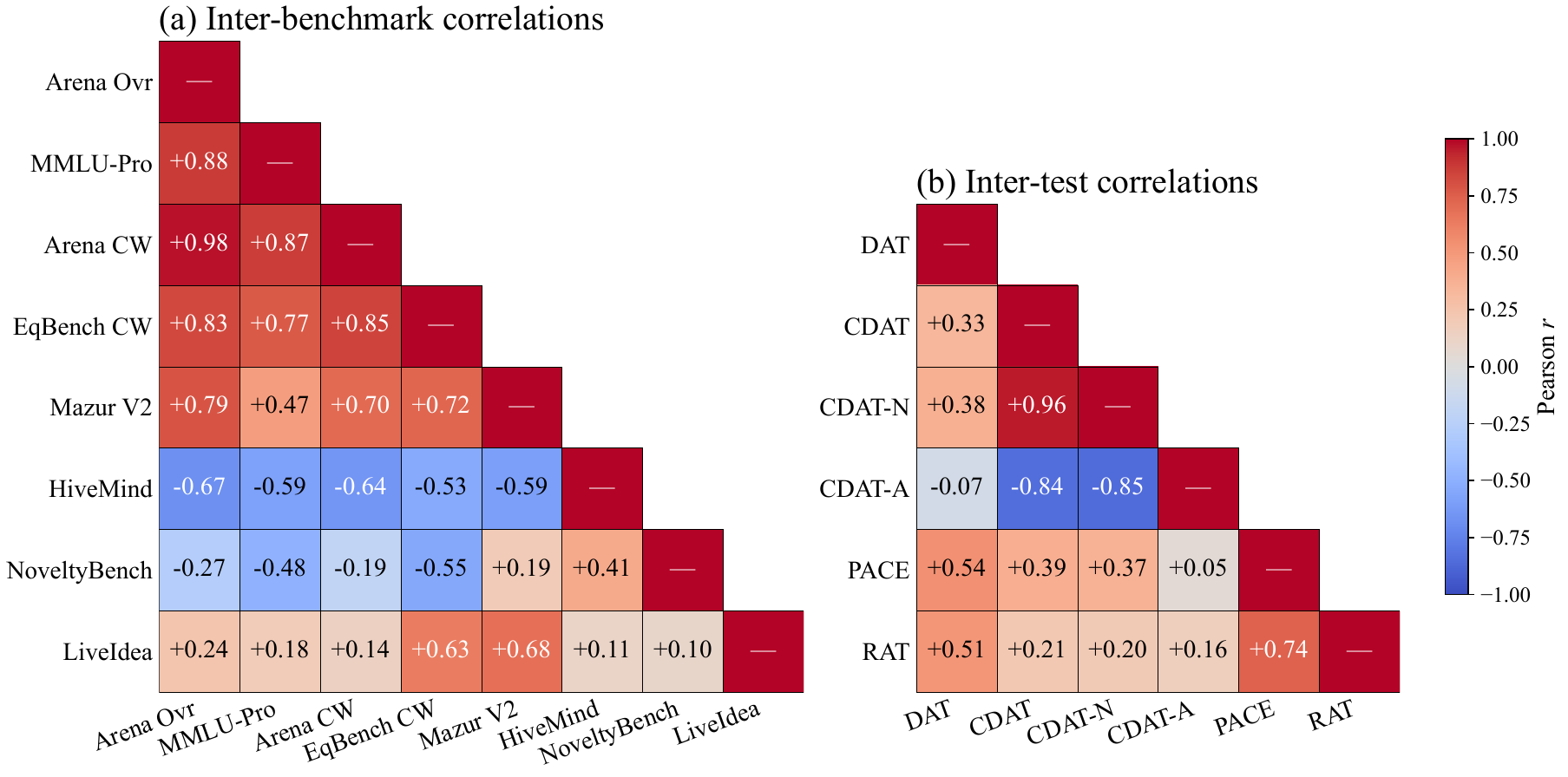}
\caption{\textbf{Inter-benchmark and inter-test correlations.} \textbf{(a)} Benchmarks are ordered by construct: general capabilities (Arena Ovr, MMLU-Pro), creative writing (Arena CW, EqBench CW, Mazur CW), divergent thinking (Hivemind, NoveltyBench), and scientific ideation (LiveIdeaBench). $n$ per cell ranges from 8 (Mazur~$\times$~Hivemind) to 54. \textbf{(b)} Inter-test correlations among the RAT, DAT, CDAT, CDAT-N, CDAT-A, PACE, using composite scores across GloVe, FastText, and SBERT for embedding-dependent tests.}
\label{fig:benchmark-correlations}
\end{figure*}

\subsubsection{Creative Writing}

\textbf{Chatbot Arena Creative Writing (Arena CW)}~\citep{chiang2024chatbot}. Arena CW is a creative writing benchmark where models respond to open-ended, user-submitted creative writing prompts, and user preference ratings are aggregated across pairwise evaluations between models, leading to per-model Elo ratings. Arena CW is the most capability-loaded of the three creative writing benchmarks ($r = 0.98$ with Arena Overall; \Cref{fig:benchmark-correlations}).

\textbf{EQ-Bench Creative Writing}~\citep{paech2023eqbench}. EQ-Bench is a creative writing benchmark where models produce responses to a set of 32 distinct prompts designed to challenge models in areas like humor, romance, spatial awareness, and unique perspectives. Responses are scored head-to-head by Claude Sonnet using a 9-criterion rubric and aggregated into Elo ratings, and the benchmark incorporates strategies to reduce known LLM-as-a-judge biases (length, position, poetic incoherence, etc.). EQ-Bench correlates strongly with Arena Overall ($r = 0.83$; \Cref{fig:benchmark-correlations}), although less than Arena CW.

\textbf{Mazur Creative Writing}~\citep{mazur2025writing}.
Mazur Creative Writing is a creative writing benchmark that tests how well LLMs incorporate a set of mandatory story elements (characters, objects, core concepts, attributes, motivations, etc.) in a short creative story. An ensemble of LLM judges applies an $\sim$18-question rubric assessing whether mandatory elements are adequately represented, as well as the overall narrative quality. The final story score is the mean of the per-grader scores. Mazur Creative Writing correlates strongly with Arena Overall at $r = 0.79$, although less than both Arena CW and EQ-Bench.

All three are heavily capability-loaded, having correlations $r = 0.98 / 0.83 / 0.79$ with Arena Overall, respectively, motivating the use of both validity \emph{and} specificity as evaluation criteria.

\subsubsection{Divergent Thinking}
We use two divergent thinking benchmarks, each of which measures output diversity in LLM responses to open-ended prompts.

\textbf{Hivemind}~\citep{hivemind}. Hivemind is a divergent thinking benchmark which measures output diversity across open-ended queries,  drawn from real-world user-ChatGPT interactions. Categories include brainstorm and ideation (``Suggest a feature for a smartwatch designed specifically for senior citizens.''), philosophical questions (``How do I understand what I want?''), speculative and hypothetical scenarios (``Create a short review of a future movie.''), and ambiguous everyday questions (``How can I live on \$1,000 per month?''). Scoring measures intra-model repetition via average pairwise embeddings dissimilarity\footnote{The original paper uses similarity, but for consistency with the divergent thinking construct, we report its complement.} using OpenAI's \texttt{text-embedding-3-small} model. Hivemind scores are negatively correlated with capability ($r = -0.67$ with Arena Overall, $r - 0.59$ with MMLU-Pro).

\textbf{NoveltyBench}~\citep{zhang2025noveltybench}. NoveltyBench evaluates the output diversity of LLMs on a similar collection of real-world user-ChatGPT interactions, as well as open-ended prompts across four categories: randomness (``Roll a make-believe 20-sided die''), factual information (``List a capital city in Africa.''), creative text writing (``Tell me a riddle.''), and subjectivity (``What's the best car to get in 2023?''). Scoring assesses the extent to which a sequence of generations is both diverse and high-quality, using a cumulative utility measure that penalizes functional equivalence to prior responses. NoveltyBench is the most capability-independent benchmark, obtaining $r = -0.27$ with Arena Overall and $r = -0.48$ with MMLU-Pro.

\subsubsection{Scientific Ideation}

\textbf{LiveIdeaBench}~\citep{ruan2024liveideabench}. LiveIdeaBench assesses LLMs on open-ended scientific idea generation. Given a single keyword (e.g., a domain term) drawn from a set spanning 18 scientific disciplines, a model must propose a research idea relevant to that keyword in a brief response. Each generated idea is scored by a panel of LLM judges along five dimensions---\emph{fluency} (does the response actually contain a usable idea), \emph{feasibility}, \emph{clarity}, \emph{originality}, and \emph{flexibility}---with overall scores reported as the average across all dimensions. The original benchmark explicitly demonstrates that scientific ideation can dissociate from general capability (e.g., QwQ-32B-preview outperforms much larger frontier models on Originality despite being weaker on knowledge tests), and empirically, we observe a similar trend where LiveIdeaBench is only moderately correlated with general capabilities (\Cref{fig:benchmark-correlations}a; $r = 0.24$ on Arena Overall, $r = 0.18$ on MMLU-Pro).

\subsection{Validity and Specificity} \label{sec:validity-specificity}
We evaluate each test on two criteria, validity and specificity, each of which is motivated and defined below.

\textbf{Validity} \space \space Standard psychometric criteria used in human creativity research \citep{Beaty2021ForwardThinking, Olson2021NamingCreativity} indicate that a test has construct validity if its scores correlate with external measures of that construct. Therefore, we report \emph{validity} as the raw Pearson correlation between a test score $X$ and benchmark score $Y$, given by $r(X, Y)$.

\textbf{Specificity} \space \space Since creative achievement benchmarks are themselves highly capability-loaded (for example, Arena CW correlates with Arena Overall at $r = 0.98$) a high raw $r(X, Y)$ may simply reflect that the test tracks general capability. We therefore report \emph{specificity}, the semi-partial Pearson correlation between the test score $X$ and the benchmark $Y$ residualized on a capability stack $g$, given by $r(X, Y \mid g) \;:=\; r\!\Bigl(X,\; Y - \hat{Y}_{g}\Bigr)$. Here, $\hat{Y}_{g}$ is the ordinary least squares prediction of $Y$ from Arena Overall Elo (preference-based) and MMLU-Pro accuracy \citep{wang2024mmlupro} (knowledge- and reasoning-based). The semi-partial correlation measures to what extent the test predicts the benchmark's capability-controlled variance, with a significant positive $r(X, Y \mid g)$ indicating the test predicts variance that capability cannot already explain.

For both validity and specificity, we report two-sided $p$-values from the standard Pearson $t$-test, $t = r\sqrt{(n - 2 - k)/(1 - r^2)} \sim t_{n - 2 - k}$, where $k$ is the number of controls regressed out ($k = 0$ for validity and $k = 2$ for Arena Overall~$+$~MMLU-Pro used for specificity).

\subsection{Theoretical Limits for Attainable Validity and Specificity} \label{subsec:val_spec_frontier}
What is the maximum attainable specificity a test can achieve, as a function of its validity and its benchmark's correlation with general capabilities? We refer to this as the \emph{validity-specificity frontier}, and prove a bound on this quantity below.

\begin{theorem}[Validity-Specificity Frontier]
\label{thm:spec-ceiling}
Let $g = (Z_1, \dots, Z_k)$ be a vector of general capability proxies, let $\hat{Y}_g$ denote the ordinary least squares prediction of $Y$ from $g$, and define $R = \mathrm{corr}(Y, \hat{Y}_g)$. For any test $X$ with validity $v = \mathrm{corr}(X, Y)$, the semi-partial correlation $r(X, Y \mid g) := r(X,\, Y - \hat{Y}_g)$ satisfies
\begin{equation}
    |r(X, Y \mid g)| \;\leq\; v\sqrt{1 - R^2} \,+\, |R|\,\sqrt{1 - v^2}.
    \label{eq:spec-ceiling}
\end{equation}
\end{theorem}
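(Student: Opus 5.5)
We work in the Hilbert space of centered random variables (or, for samples, centered vectors in $\mathbb{R}^n$) with the covariance inner product. Correlations then become cosines of angles, and the theorem reduces to a statement about angles between vectors.

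First normalize. Set $\tilde X = X/\|X\|$ and $\tilde Y = Y/\|Y\|$, both centered. Since $\hat Y_g$ is the orthogonal projection of $Y$ onto $\mathrm{span}(g)$ (with the constant absorbed by centering), we can write
\begin{equation*}
\tilde Y = R\,u + \sqrt{1-R^2}\,w,
\end{equation*}
where $u = \hat Y_g/\|\hat Y_g\|$ and $w = (Y-\hat Y_g)/\|Y-\hat Y_g\|$ are orthonormal. Here $R = \mathrm{corr}(Y,\hat Y_g)\ge 0$, because $\langle Y,\hat Y_g\rangle = \|\hat Y_g\|^2$. The degenerate cases $R=0$ and $R=1$ are handled directly. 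If $R=1$, the residual is zero and the left side is $0$ or undefined. If $R=0$, then $w=\tilde Y$ and the bound reads $|r|\le v$, which holds with equality.

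With this notation, $v = \langle \tilde X,\tilde Y\rangle$ and $r(X,Y\mid g) = \langle \tilde X, w\rangle$. Both are cosines of angles: $\theta = \angle(\tilde X,\tilde Y)$ satisfies $\cos\theta = v$, and $\alpha = \angle(\tilde Y, w)$ satisfies $\cos\alpha = \sqrt{1-R^2}$ and $\sin\alpha = R$.

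The key step is the triangle inequality for angles on the unit sphere:
\begin{equation*}
\angle(\tilde X, w) \ge |\theta - \alpha|.
\end{equation*}
This is the standard fact that geodesic distance on the sphere is a metric. We apply it in the 3-dimensional span of $\tilde X$, $u$, $w$; alternatively, we decompose $\tilde X = a\tilde Y + b\,e + c\,f$ with $\{\tilde Y, e\}$ spanning $\mathrm{span}(u,w)$ and $f$ orthogonal to both. Since cosine is decreasing on $[0,\pi]$, the angle bound gives
\begin{equation*}
\langle \tilde X, w\rangle \le \cos(\theta-\alpha) = v\sqrt{1-R^2} + R\sqrt{1-v^2}.
\end{equation*}
For the lower bound, we apply the same argument to $-w$, using the angle $\pi-\alpha$ between $\tilde Y$ and $-w$. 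This yields
\begin{equation*}
-\langle\tilde X,w\rangle \le \cos(\theta-(\pi-\alpha)) = -v\sqrt{1-R^2} + R\sqrt{1-v^2},
\end{equation*}
which is at most the stated bound when $v\ge 0$.

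The main obstacle is the sign of $v$. When $v<0$, the lower-side bound is $|v|\sqrt{1-R^2}+R\sqrt{1-v^2}$, which exceeds the stated right-hand side. So either the theorem implicitly assumes $v\ge0$, or $v$ in \eqref{eq:spec-ceiling} should be read as $|v|$. The natural resolution is to replace $X$ by $-X$ when $v<0$: this flips the sign of $r(X,Y\mid g)$ but leaves $|r(X,Y\mid g)|$ unchanged, so the bound holds with $|v|$ in general. We will state this reduction explicitly and then prove the case $v\ge0$ as above.

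Finally, we will check that the bound is tight. Taking $\tilde X$ in $\mathrm{span}(\tilde Y, w)$, at angle $\theta$ from $\tilde Y$ on the side toward $w$, gives equality. This justifies calling the bound a frontier.
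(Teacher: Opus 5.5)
Your proof is correct, and it takes a different route from the paper while arriving at the same two-sided interval, $r(X,Y\mid g)\in[v\sqrt{1-R^2}-R\sqrt{1-v^2},\,v\sqrt{1-R^2}+R\sqrt{1-v^2}]$.

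\textbf{The paper's route.} It standardizes and writes down the $3\times 3$ correlation matrix of $(X,Y,\hat Y_g/R)$. From $\det\Sigma\ge 0$ it gets a quadratic constraint confining $a=\mathrm{corr}(X,\hat Y_g)$ to $[vR-\delta,\,vR+\delta]$. It then computes $\sigma(\hat Y_g)=R$ and $\sigma(Y-\hat Y_g)=\sqrt{1-R^2}$ separately, and converts the bound through $r(X,Y\mid g)=(v-aR)/\sqrt{1-R^2}$.

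\textbf{Your route.} You bound the correlation with the normalized residual $w$ directly, using the spherical triangle inequality. Your orthogonal decomposition $\tilde Y=Ru+\sqrt{1-R^2}\,w$ packages both of the paper's variance computations, so no conversion step is needed. You also treat the degenerate cases $R\in\{0,1\}$, where the paper's $\hat Y_g/R$ or the residual normalization breaks down.

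\textbf{How the two relate.} The two key lemmas are really one fact. For three unit vectors, nonnegativity of the Gram determinant $1+2c_1c_2c_3-c_1^2-c_2^2-c_3^2$ is equivalent to the three angles obeying the triangle inequalities. The paper's interval for $a$ is exactly $[\cos(\theta+\beta),\cos(\theta-\beta)]$ with $\cos\beta=R$.

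\textbf{What each buys.} The determinant formulation is also sufficient, since any PSD correlation matrix is realizable (take $g$ to be the third variable). So it implicitly characterizes the whole feasible region the paper shades in its frontier plots. Your angle argument yields only the inequality, but you add an explicit extremal configuration for tightness, which the paper never states.

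\textbf{The sign of $v$.} Your handling of this is a genuine correction, not a side remark. With signed $v$, the inequality as stated is false for $v<0$. For example, $v=-0.1$ and $R=0.9$ give a right-hand side of about $0.85$, while $|r|\approx 0.94$ is attained at the lower endpoint. The paper's final step (``which gives the bound'') silently assumes $v\ge 0$. The correct general statement has $|v|$ in place of $v$, and your reduction $X\mapsto -X$ is the right way to get it. Similarly, $|R|=R$ throughout, because $R=\sigma(\hat Y_g)/\sigma(Y)\ge 0$; both your argument and the paper's own derivation show this.
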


\noindent The proof of \Cref{thm:spec-ceiling} is given in \Cref{app:spec-bound} of the appendix. The intuition is that a creativity test highly correlated with a capability-loaded benchmark \emph{cannot} be very decoupled from capability (i.e., specificity). In particular, a perfectly valid test ($v = 1$) has specificity bounded by $\sqrt{1 - R^2}$, and a benchmark with $|R|$ near $1$ therefore leaves almost no specificity headroom for \emph{any} test, regardless of construction. This is why Arena CW ($R = 0.98$ on $g = (\text{Arena Overall}, \text{MMLU-Pro})$) caps specificity at $\approx 0.20$ for a perfect-validity test, while NoveltyBench Utility ($R \approx -0.33$) admits a much wider ceiling (Figure~\ref{fig:headline}, bottom row).

\begin{table*}[!t]
\centering
\setlength{\tabcolsep}{2.5pt}
\resizebox{\textwidth}{!}{%
\sffamily\small
\renewcommand{\arraystretch}{1.05}%
\begin{tabular}{@{}l c >{\columncolor{prHighlight!50}}c c >{\columncolor{prHighlight!50}}c c >{\columncolor{prHighlight!50}}c c >{\columncolor{prHighlight!50}}c c >{\columncolor{prHighlight!50}}c c >{\columncolor{prHighlight!50}}c@{}}
\toprule
 & \multicolumn{6}{c}{\textit{Creative Writing}}
 & \multicolumn{4}{c}{\textit{Divergent Thinking}}
 & \multicolumn{2}{c}{\textit{Scientific Ideation}} \\
 \cmidrule(lr){2-7} \cmidrule(lr){8-11} \cmidrule(lr){12-13}
 & \multicolumn{2}{c}{\textbf{Arena CW $\uparrow$}}
 & \multicolumn{2}{c}{\textbf{EQ-Bench CW $\uparrow$}}
 & \multicolumn{2}{c}{\textbf{Mazur CW $\uparrow$}}
 & \multicolumn{2}{c}{\textbf{Hivemind Div. $\uparrow$}}
 & \multicolumn{2}{c}{\textbf{NovBench Util. $\uparrow$}}
 & \multicolumn{2}{c}{\textbf{LiveIdeaBench $\uparrow$}} \\
 & \multicolumn{2}{c}{\footnotesize $n{=}40$} & \multicolumn{2}{c}{\footnotesize $n{=}27$} & \multicolumn{2}{c}{\footnotesize $n{=}18$} & \multicolumn{2}{c}{\footnotesize $n{=}21$} & \multicolumn{2}{c}{\footnotesize $n{=}10$} & \multicolumn{2}{c}{\footnotesize $n{=}21$} \\
 \cmidrule(lr){2-3} \cmidrule(lr){4-5} \cmidrule(lr){6-7} \cmidrule(lr){8-9} \cmidrule(lr){10-11} \cmidrule(lr){12-13}
 & Valid. & Spec. & Valid. & Spec. & Valid. & Spec. & Valid. & Spec. & Valid. & Spec. & Valid. & Spec. \\
\midrule
\multicolumn{13}{@{}l}{\textbf{Overall (mean $z$-score across 3 embeddings)}} \\
DAT & \good{$+.51^{***}$} & $+.05$ & \cellcolor{green!15}\good{$+.71^{***}$} & \cellcolor{green!15}\good{$+.41^{*}$} & \cellcolor{green!15}\good{$+.66^{**}$} & \cellcolor{green!15}\good{$+.49^{*}$} & $+.11$ & $+.01$ & $-.21$ & $-.20$ & $+.22$ & $+.28$ \\
CDAT & $-.04$ & $+.22$ & $+.08$ & $+.03$ & $+.10$ & $+.43$ & \cellcolor{green!15}$+.25$ & \cellcolor{green!15}$+.10$ & \cellcolor{green!15}$+.63$ & \cellcolor{green!15}$+.60$ & $+.02$ & $+.34$ \\
\quad CDAT-N & $-.16$ & $+.20$ & $-.09$ & $+.08$ & $+.13$ & $+.40$ & $+.31$ & $+.08$ & $+.56$ & $+.45$ & $-.13$ & $-.10$ \\
\quad CDAT-A & \good{$+.59^{***}$} & $-.05$ & \good{$+.50^{*}$} & $+.05$ & $+.20$ & $-.28$ & \bad{$-.49^{*}$} & $-.09$ & $-.63$ & $-.40$ & $+.20$ & $+.17$ \\
PACE & \cellcolor{green!15}\good{$+.79^{***}$} & \cellcolor{green!15}$+.11$ & \good{$+.73^{***}$} & $+.21$ & \good{$+.75^{***}$} & $+.14$ & $-.23$ & $+.33$ & $-.20$ & $-.00$ & \cellcolor{green!15}$+.32$ & \cellcolor{green!15}$+.30$ \\
RAT$^\dagger$ & \good{$+.76^{***}$} & $-.02$ & \good{$+.57^{**}$} & $-.04$ & \good{$+.50^{*}$} & $+.07$ & \bad{$-.55^{*}$} & $+.05$ & $-.30$ & $-.05$ & $+.20$ & $+.10$ \\
\midrule
\multicolumn{13}{@{}l}{\textit{GloVe 840B}} \\
DAT & \good{$+.44^{**}$} & $-.01$ & \good{$+.57^{**}$} & $+.28$ & \good{$+.55^{*}$} & $+.39$ & $+.07$ & $+.04$ & $-.37$ & $-.46$ & $+.12$ & $+.18$ \\
CDAT & $+.06$ & $+.14$ & $+.12$ & $+.05$ & $+.46$ & $+.44$ & $+.33$ & $+.22$ & $+.59$ & $+.53$ & $+.06$ & $+.38$ \\
\quad CDAT-N & $-.18$ & $+.11$ & $-.13$ & $+.04$ & $+.20$ & $+.46$ & $+.35$ & $+.10$ & $+.47$ & $+.30$ & $-.09$ & $-.06$ \\
\quad CDAT-A & \good{$+.58^{***}$} & $-.00$ & \good{$+.49^{*}$} & $+.04$ & $+.13$ & $-.34$ & \bad{$-.51^{*}$} & $-.09$ & $-.55$ & $-.30$ & $+.15$ & $+.13$ \\
PACE & \good{$+.79^{***}$} & $+.00$ & \good{$+.72^{***}$} & $+.15$ & \good{$+.70^{**}$} & $+.12$ & $-.35$ & $+.22$ & $-.45$ & $-.17$ & $+.32$ & $+.30$ \\
\midrule
\multicolumn{13}{@{}l}{\textit{FastText crawl-300d-2M}} \\
DAT & \good{$+.37^{*}$} & $+.00$ & \good{$+.60^{***}$} & $+.29$ & \good{$+.63^{**}$} & \good{$+.57^{*}$} & $+.12$ & $-.11$ & $+.21$ & $+.26$ & $-.01$ & $+.04$ \\
CDAT & $-.03$ & $+.35$ & $+.08$ & $+.04$ & $-.00$ & $+.38$ & $+.20$ & $+.05$ & $+.64$ & $+.62$ & $+.03$ & $+.34$ \\
\quad CDAT-N & $-.07$ & $+.32$ & $-.01$ & $+.05$ & $+.06$ & $+.34$ & $+.21$ & $+.04$ & $+.53$ & $+.46$ & $-.14$ & $-.11$ \\
\quad CDAT-A & \good{$+.55^{***}$} & $-.09$ & \good{$+.46^{*}$} & $+.05$ & $+.26$ & $-.23$ & \bad{$-.45^{*}$} & $-.08$ & \bad{$-.67^{*}$} & $-.45$ & $+.21$ & $+.19$ \\
PACE & \good{$+.79^{***}$} & $+.29$ & \good{$+.74^{***}$} & $+.25$ & \good{$+.70^{**}$} & $+.08$ & $-.28$ & $+.26$ & $-.14$ & $+.07$ & $+.25$ & $+.23$ \\
\midrule
\multicolumn{13}{@{}l}{\textit{Sentence-BERT all-mpnet-base-v2}} \\
DAT & \good{$+.47^{**}$} & $+.13$ & \good{$+.56^{**}$} & \good{$+.42^{*}$} & \good{$+.48^{*}$} & $+.29$ & $+.07$ & $+.11$ & $-.17$ & $-.12$ & $+.43$ & \good{$+.47^{*}$} \\
CDAT & $+.00$ & $+.24$ & $+.02$ & $-.00$ & $+.36$ & $+.34$ & $+.27$ & $+.14$ & $+.64$ & $+.62$ & $-.01$ & $+.30$ \\
\quad CDAT-N & $-.21$ & $+.13$ & $-.11$ & $+.14$ & $+.12$ & $+.37$ & $+.37$ & $+.10$ & $+.61$ & $+.50$ & $-.14$ & $-.11$ \\
\quad CDAT-A & \good{$+.60^{***}$} & $-.04$ & \good{$+.51^{**}$} & $+.05$ & $+.20$ & $-.25$ & \bad{$-.50^{*}$} & $-.10$ & \bad{$-.64^{*}$} & $-.42$ & $+.21$ & $+.19$ \\
PACE & \good{$+.67^{***}$} & $+.05$ & \good{$+.68^{***}$} & $+.23$ & \good{$+.76^{***}$} & $+.18$ & $+.11$ & \good{$+.47^{*}$} & $+.02$ & $+.09$ & $+.35$ & $+.33$ \\
\bottomrule
\end{tabular}%
}
\vspace{2pt}
\caption{\textbf{Validity and specificity for every test--benchmark pair.} The \textbf{Overall} block (top) averages across the three embeddings and is the main result referenced in the text. $^\dagger$The RAT has no embedding dependence and appears only in the Overall row. For robustness, we also report per-embedding blocks. Both validity and specificity are computed on the per-benchmark spec pool (models with the test score, the benchmark score, and both capability proxies $g = (\text{Arena Overall}, \text{MMLU-Pro})$), so the validity and specificity for each cell are on the same set of models. \colorbox{green!15}{Green cells} mark the test with the highest $v + s$ on each benchmark in the Overall block---both validity and specificity cells of the winning row are highlighted. Bold = significant at $p<.05$, stars denote significance levels ($^*p<.05$, $^{**}p<.01$, $^{***}p<.001$). The sample size $n$ reported under each header is the largest pool of models with Arena Overall and MMLU scores across the six tests for that benchmark.}
\label{tab:correlations}
\end{table*}

\section{Empirical Evaluation of Existing Tests} \label{sec:results}
In this section, we discuss our experimental setup in \Cref{sec:empirical-setup} and report our experimental findings in \Cref{subsec:results}.

\subsection{Empirical Setup} \label{sec:empirical-setup}
In order to evaluate the effectiveness of creativity tests for predicting creative achievement benchmarks, we obtain raw test scores across a large set ($n=54$) of instruction-tuned LLMs across 10 providers (OpenAI, Anthropic, Google, Meta, Mistral, Qwen, DeepSeek, Cohere, NVIDIA, Microsoft) via OpenRouter. DAT and CDAT use $T \in \{1.0, 1.5, 2.0\}$ (40 trials per temperature, with \texttt{top\_p}$=1$, \texttt{top\_k}$=0$). PACE follows the original setup in \citet{pace}, where sampling is done with temperature $T = 0$ and stochasticity is controlled by varying the random seed for each anchor word. We use 50 anchor words, and collect three parallel 20-word chains per seed. Lastly, on the CDAT, we use 50 cue words, and following \citet{nakajima2026beyond}, per-cue appropriateness values are compared to a random-noun baseline via Welch's $t$-test, with a Benjamini--Hochberg false discovery rate (FDR) correction at $\alpha = .001$ across models within each temperature. A model's responses at a given temperature are retained if their FDR-adjusted $p$-value passes and its mean appropriateness exceeds the random baseline. The CDAT score we report is the mean of CDAT-N across passing temperatures. For reproducibility purposes, exact prompts used for each test are given in Appendix~\Cref{app:prompts}.

\paragraph{Embedding models.} For the embedding-based creativity tests (DAT, CDAT, CDAT-N, CDAT-A, PACE), the score depends on the embedding model used for pairwise cosine similarity. To evaluate robustness, we score each embedding-based test under three embeddings---GloVe 840B 300d, FastText crawl-300d-2M, and Sentence-BERT \texttt{all-mpnet-base-v2}---which differ in training. GloVe and FastText are static word-level co-occurrence models (300-dim), while Sentence-BERT (\texttt{all-mpnet-base-v2}; 768-dim) is a transformer sentence encoder trained via contrastive sentence-pair objectives.

\subsection{Results} \label{subsec:results}

\begin{figure*}[t]
\centering
\includegraphics[width=\textwidth]{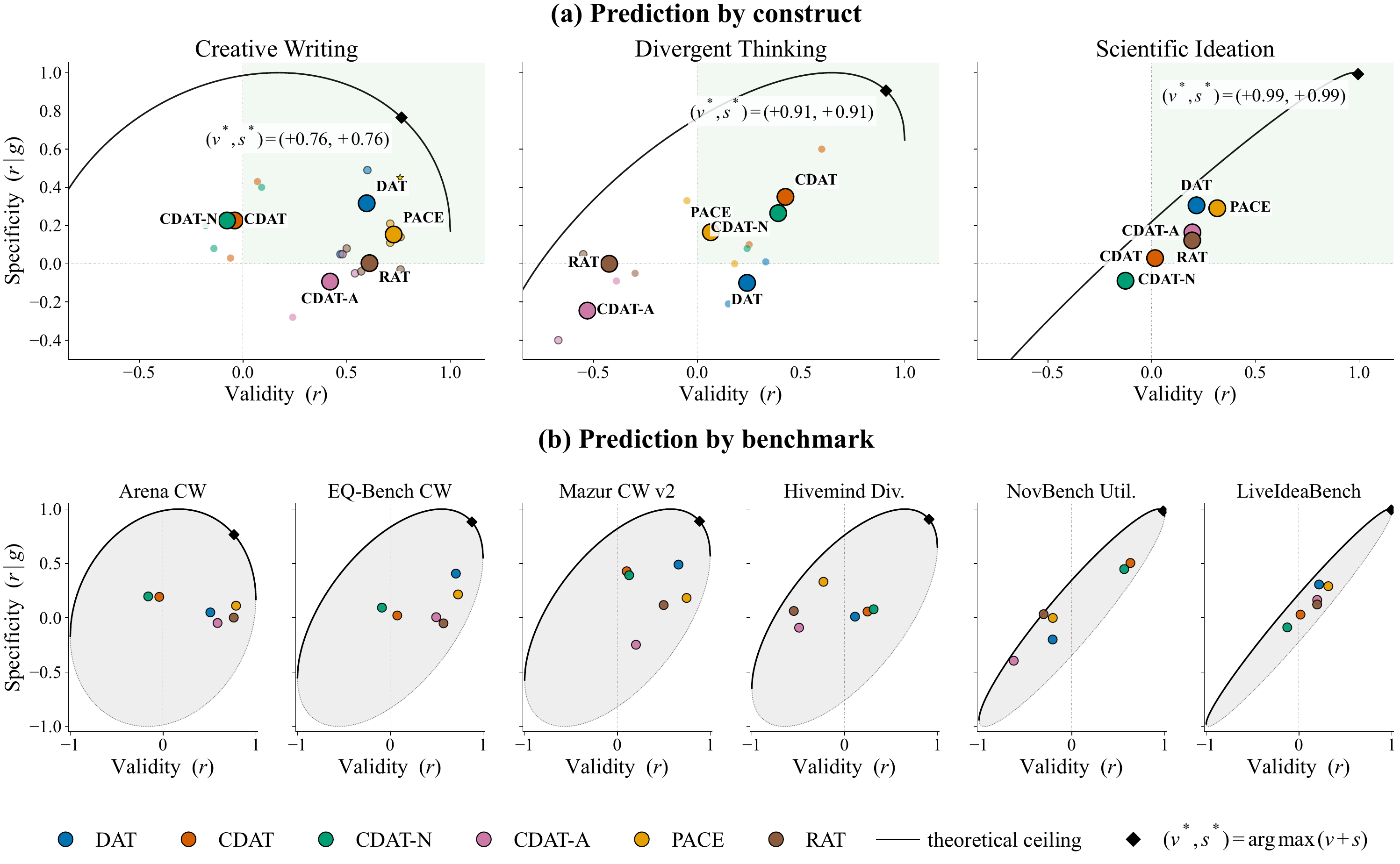}
\caption{\textbf{Validity and specificity by construct and benchmark, with theoretical ceilings.} \emph{(a) Prediction by construct:} Each small  point is a (test, benchmark) cell from the Overall block of Table~\ref{tab:correlations}, while the large black-outlined circle is the construct-level  average across all benchmarks. A gold $\bigstar$ = significant on both axes. The black curve is the construct-level theoretical specificity ceiling obtained in \Cref{thm:spec-ceiling}. \emph{(b) Prediction by benchmark:} per-benchmark specificity-ceiling lenses with the panel's $R$ in the title. The grey region is the feasible $(v, r(X,Y\mid g))$ set.}
\label{fig:headline}
\end{figure*}

In \Cref{tab:correlations}, we report the validity and specificity across all tests and benchmarks. \Cref{fig:headline} visualizes validity and specificity aggregated across the three constructs and plots the validity-specificity frontier we derived in \Cref{subsec:val_spec_frontier}. The test and benchmark scores used for correlations are fully reported in Appendix~\Cref{tab:per-model-scores} and \Cref{tab:per-model-benchmarks}, respectively.

\paragraph{Evaluating specificity reveals that PACE is mostly a capability proxy.} A core motivation addressed in \Cref{sec:motivating_problems} is that tests should measure aspects of creative achievement independent of what general capabilities already predict. We measure this via the specificity metric defined in \Cref{sec:validity-specificity}. On creative writing benchmarks, PACE obtains strong and statistically significant validity ($r \in [+0.73, +0.79]$); however, after controlling for general capabilities, PACE collapses to non-significant specificity ($r | g \in [+0.11, +0.21]$). On creative writing, the chained-association testing methodology largely measures model qualities that capability already predicts.

\paragraph{The Divergent Association Task (DAT) is the best predictor of creative writing.} The DAT achieves moderate-to-high validity on the three creative writing benchmarks (Arena CW $r=+0.51$, EQ-Bench CW $r=+0.71$, Mazur CW $r=+0.66$), and on EQ-Bench and Mazur retains significant specificity once capability is partialled out ($r|g = +0.41^{*}$ and $+0.49^{*}$). As mentioned, PACE has high raw validity averaged across the three creative writing benchmarks ($\approx +0.61$), but its specificity averages near zero. Similarly, CDAT's appropriateness facet (CDAT-A) has high validity ($r = +0.59^{***}, +0.50^{*}, +0.20$) but low specificity as well, predicting rankings only because it tracks capability.

\paragraph{The Conditional DAT is the best predictor of divergent thinking.} The CDAT achieves high validity on NoveltyBench ($r = +0.63$) and moderate validity on Hivemind ($r = 0.25$), and is the only test whose specificity is positive on both divergent thinking benchmarks (Hivemind $r|g = +0.10$ and NoveltyBench $r|g = +0.60$). Furthermore, its appropriateness facet, CDAT-A, has both negative validity and specificity for divergent thinking (NoveltyBench $r = -0.63$, $r|g = -0.40$), consistent with appropriateness being a convergent-thinking measure that should negatively correlate with output diversity by construction.

\paragraph{None of the existing tests is a significant predictor of scientific ideation.}  On LiveIdeaBench ($n = 21$), no test's validity or specificity reaches significance. PACE shows the largest positive specificity ($r|g = +0.30$, n.s.), with DAT and CDAT also positive but smaller ($r|g = +0.28$ and $+0.34$). In \Cref{sec:drat}, we address this gap by introducing a new test that does achieve significant validity and specificity on this construct.

\paragraph{No single test predicts all constructs well.} The results indicate that the DAT is the best predictor of creative writing, while CDAT is the best predictor of divergent thinking, and none of the tests is a significant predictor of scientific ideation. Given these findings, each test should not be treated as a general-purpose ``creativity'' measure, but rather a local predictor, tied to the specific construct which the test is best suited to index.

\paragraph{Existing tests are well below the validity-specificity frontier.} Empirically, we find that observed tests are well below the validity-specificity frontier for nearly every benchmark in Figure~\ref{fig:headline}(b). This indicates significant headroom to design more effective creativity tests. The $(v^\star, s^\star)$ points marked in \Cref{fig:headline} give the validity-specificity pair on the frontier that maximizes the sum $v + s$ for each construct or benchmark. They are the most that any test could jointly achieve on both criteria given the benchmark's correlation with general capability, and the gap between each observed test and its $(v^\star, s^\star)$ measures how much room is left for better tests of the same construct.

\begin{takeaway}
    The Divergent Association Task (DAT) is the best predictor of creative writing and the Conditional DAT is the best predictor of divergent thinking, but no test predicts all constructs well, and none of the tests is a significant predictor of scientific ideation ability.
\end{takeaway}
\section{Designing a Test That Predicts Scientific Creativity} \label{sec:drat}
 
In \Cref{subsec:results}, we found that existing creativity tests fail to reliably predict scientific ideation ability. Most surprisingly, even the Conditional DAT, which measures both utility and novelty, fails to achieve significant validity and specificity on this construct. Scientific creativity plausibly requires adherence to multiple performance criteria at once \citep{Maher2010EvaluatingSystems}, which the single-criterion appropriateness gate in \citet{nakajima2026beyond} fails to capture. Here, we introduce the \textbf{Divergent Remote Association Test} (DRAT), a vocabulary-space creativity test that simultaneously assesses convergent and divergent thinking,  and which is a significant predictor of scientific ideation ability in LLMs. We start by motivating the DRAT in \Cref{sec:drat-motivation}, then explain the formal test design in \Cref{sec:drat-spec}. Afterwards, we briefly compare the DRAT with existing tests in \Cref{sec:drat-vs-existing}, highlight our empirical findings in \Cref{sec:drat-results}, and perform robustness ablations in \Cref{sec:drat-ablations}.

\subsection{Motivation} \label{sec:drat-motivation}

Creativity is typically divided into one of two abilities: \emph{convergent thinking} is the ability to give a single correct answer to a constrained problem, while \emph{divergent thinking} is the ability to generate many unique responses to an open-ended task \citep{Dietrich2019TypesCreativity}. Existing creativity tests largely measure only one of the two types at a time (\Cref{fig:test-taxonomy}). The Remote Associates Test (RAT;~\citet{Mednick1962TheProcess.}) assesses convergent thinking, presenting three remote stimulus words that should all be connected through a single word that applies to them all (e.g.\ \emph{cottage / swiss / mouse} $\to$ \emph{cheese}). On the other hand, the Divergent Association Task (DAT;~\citet{Olson2021NamingCreativity}) and the Parallel Association Chain Evaluation simply assess the ability to produce semantically dissimilar words in an unconstrained fashion. The Conditional DAT introduces an appropriateness constraint on top of the DAT, such that novelty is only scored if it is sufficiently relevant to a single cue word. However, creative utility is typically defined by how well an idea or artifact satisfies multiple constraints at once \citep{Maher2010EvaluatingSystems, Schapiro2025CombinatorialAbilities}, rendering the single-criterion constraint insufficient to predict scientific creativity alone.
\newpage
\subsection{DRAT Test Design} \label{sec:drat-spec}
\begin{figure*}[!t]
\centering
\includegraphics[width=0.95\textwidth]{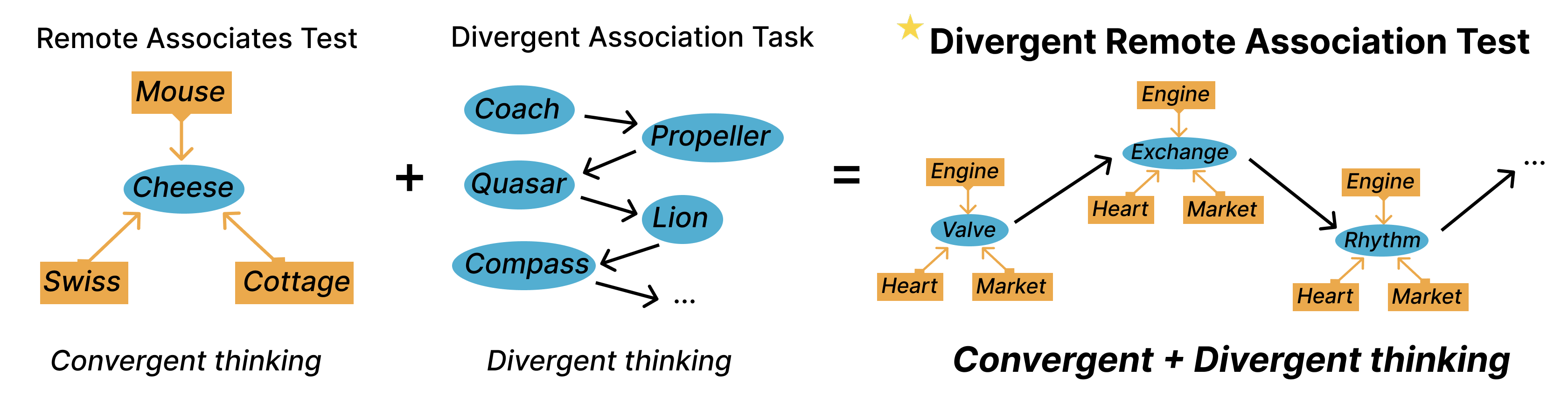}
\caption{\textbf{Overview of the Divergent Remote Association Test (DRAT).} The DRAT is a combination of the Remote Associates Test and Divergent Association Task which assesses both convergent and divergent thinking simultaneously. On the DRAT, the goal is to generate ten words that are maximally different from each other, but each of which could be metaphorically applied to each of $k$ fixed anchors ($k=3$ here).}
\label{fig:drat-concept}
\end{figure*}

\begin{wrapfigure}{r}{0.45\textwidth}
\centering
\includegraphics[width=0.45\textwidth]{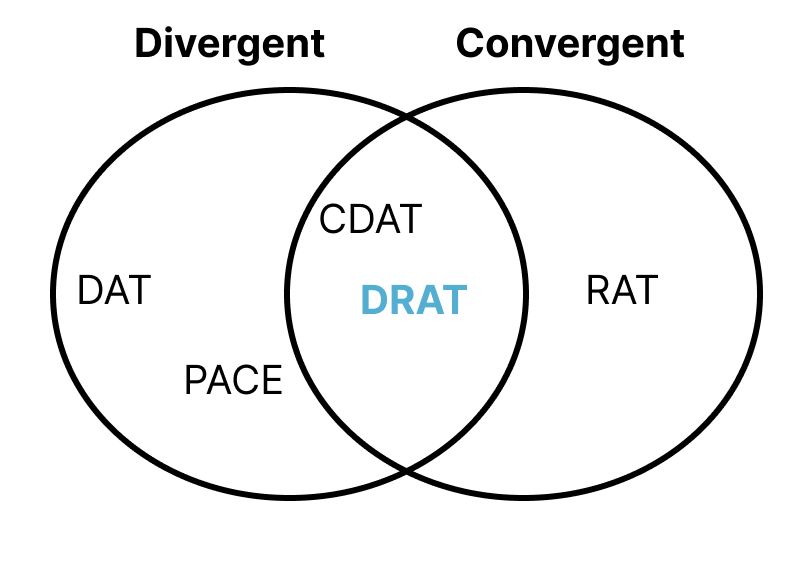}
\caption{\textbf{Taxonomy of vocabulary-space creativity tests by cognitive mode.} The DAT and PACE measure only divergent thinking, with the DAT instructing responses to be as divergent as possible, while the RAT only assesses convergent thinking. The CDAT's single-criterion appropriateness threshold assesses a limited aspect of convergent thinking, and the DRAT assesses both abilities at their fullest.}
\label{fig:test-taxonomy}
\vspace{-8pt}
\end{wrapfigure}

The Divergent Remote Association Test (DRAT) assesses both convergent and divergent thinking at once. An overview of the test design is featured in \Cref{fig:drat-concept}, which highlights how, like the Divergent Association Task, the DRAT assesses the ability to name ten words that are as different from each other as possible, while, similar to the Remote Associates Test, each word should be (metaphorically) related to a set $A = (a_1, \dots, a_k)$ of $k$ anchor words. The test then applies a utility threshold, which only retains words that are sufficiently relevant to at least one of the $k$ anchors. Afterwards, the final score is given by the mean pairwise embedding distance among the surviving words, matching the DAT formulation. We use $k = 4$ as default but report an ablation over $k \in \{2, 3, 4\}$ in \Cref{sec:drat-ablations}. Below, we explain the scoring mechanism in detail.

To start, we let $A = (a_1, \dots, a_k)$ denote a set of $k$ anchor words, $\phi : \text{word} \to \mathbb{R}^d$ be a word-to-embedding mapping, and let $W$ denote a model response. Each response word $w$ is assigned a utility score equal to its similarity to the closest anchor:\footnote{We find that using a $\max$ performs better than using a $\min$ or an average (see \Cref{app:drat-gate}). The reason for this may be that enforcing similarity to \emph{all} anchors restricts the valid vocabulary in a way that interferes too much with model abilities to simultaneously engage in divergent thinking.}
\begin{equation}
  u(w;\,A) \;=\; \max_{i \in \{1,\dots,k\}} \cos\!\bigl(\phi(w),\, \phi(a_i)\bigr).
  \label{eq:drat-utility}
\end{equation}

Then, given an anchor set $A$, we take a fixed pool $\mathcal{P}$ of random nouns and score each of their utility with respect to $A$. This yields a distribution of utility scores for \emph{random} words that we then use as a threshold during the test, so that a word's relevance must be above the 90th percentile of random nouns in order to be included during scoring. This allows us to obtain a relevance metric which is calibrated to the intrinsic difficulty of each anchor set $A$, given by the threshold $\tau_A$ below:
\begin{equation}
  \tau_A \;=\; \mathrm{quantile}_q\!\Bigl(\bigl\{ u(p;\,A) : p \in \mathcal{P} \bigr\}\Bigr),
  \label{eq:drat-tau}
\end{equation}
where $q = 0.90$ throughout. As mentioned, the $\tau_a$ metric dynamically calibrates to the diffuseness or compactness of each anchor set.  The methodology for this step is heavily motivated by the appropriateness gate in \citet{nakajima2026beyond}, which compares appropriateness values to a random noun baseline, and uses this to filter out entire \emph{models} that fail to exceed this threshold. Our methodology does thresholding, but on a \emph{per-word} basis.

Given the response $W$, we define the useful set of words as those whose utility exceeds the random-noun threshold:
\begin{equation}
  S(W;\,A) \;=\; \bigl\{\, w \in W \,:\, u(w;\,A) > \tau_A \,\bigr\}.
\end{equation}
Finally, with the $k = |S|$ remaining words, enumerated as $S = \{w_1, \dots, w_k\}$, the DRAT score is given by:
\begin{equation}
  \mathrm{DRAT}(W;\,A) \;=\;
  \frac{100}{k(k-1)} \sum_{\substack{i \neq j}}^{k}
    \bigl(1 - \cos(\phi(w_i),\, \phi(w_j))\bigr)
  \quad \text{if } k \geq n_{\min},\; \text{else } 0.
  \label{eq:drat}
\end{equation}
Notably, the score returns zero if too few words exceed the threshold, as determined by $n_{\min}$. We set $n_{\min} = 3$ throughout experiments, but an ablation over this quantity in \Cref{app:drat-nmin} indicates most values $n_{\min} \in [2, 5]$ work as well. DRAT scores are within the theoretical range of $[0, 200]$, just like the DAT and CDAT.

\begin{figure}[!tb]
\centering
\begin{examplebox}{DRAT example}

\textbf{Prompt.} Given $k$ remote anchors (here $k=4$: \textbf{``heartbeat''}, \textbf{``oscillator''}, \textbf{``pipeline''}, \textbf{``topology''}), generate 10 nouns that are maximally different from each other and each of which could be metaphorically applied to all of the anchors.

\textcolor{green!45!black}{\textbf{Good response (Score $= 81.99$):}} river, symphony, skeleton, breath, labyrinth, garden, engine, web, clockwork, fabric

\textcolor{red!55!black}{\textbf{Diversity collapse (Score $= 71.31$):}} rhythm, pulse, flow, network, passage, current, circuit, structure, cascade, vibration

\textcolor{red!55!black}{\textbf{Relevance collapse (Score $= 0$):}} sunrise, thunderbolt, maze, symphony, sculpture, ocean, meteor, rainbow, mosaic, quasar

\end{examplebox}
\caption{\textbf{Example of prompts and responses to the DRAT.} The \textcolor{green!45!black}{\textbf{good response}} contains words which are sufficiently relevant to each anchor while being mutually distant in embedding space. We show two failure modes below: (1) \textcolor{red!55!black}{\textbf{diversity collapse}}, where 8/10 words meet the threshold but cluster within a local vocabulary space, dragging the mean pairwise distance among survivors down by ${\sim}13\%$ relative to the good response); and (2) \textcolor{red!55!black}{\textbf{relevance collapse}}, where 0/10 words meet the threshold, and the response is divergent in embedding space but none of the words is relevant.}
\label{fig:drat-example}
\end{figure}

\subsection{Relation to Existing Tests} \label{sec:drat-vs-existing}

\begin{wraptable}{r}{0.3\textwidth}
\centering
\sffamily\small
\setlength{\tabcolsep}{2pt}
\renewcommand{\arraystretch}{1.15}
\begin{tabular}{@{}c c@{}}
\toprule
\textbf{Test} & \textbf{$r$ with DRAT} \\
\midrule
DAT                                & $+.14$              \\
CDAT                               & \bad{$-.34^{*}$}    \\
\quad CDAT-N     & \bad{$-.38^{**}$}   \\
\quad CDAT-A     & \good{$+.64^{***}$} \\
PACE                               & $+.24$              \\
RAT                   & \good{$+.32^{*}$}   \\
\bottomrule
\end{tabular}
\caption{\textbf{DRAT correlations with existing tests.} Each cell is the Pearson correlation between the DRAT and an existing creativity test. \textbf{Bold} denotes $p<.05$; stars denote $^{*}p<.05$, $^{**}p<.01$, $^{***}p<.001$.}
\vspace{-35pt}
\label{tab:drat_correlations}
\end{wraptable}

Comparing the DRAT to existing tests, we find that it correlates significantly positively with CDAT-A, the appropriateness facet of the CDAT ($r = +0.64$, $p < 0.001$), since both the DRAT and CDAT-A use a utility threshold. However, the DRAT correlates significantly negatively with the overall CDAT scores ($r = -0.34$, $p < 0.05$) because it correlates negatively with its novelty facet CDAT-N ($r = -0.38$, $p < 0.01$). The DRAT has positive correlations with the DAT ($r = +0.14$) and PACE ($r = +0.24$), but neither reaches significance. Lastly, the DRAT correlates significantly positively with the RAT ($r = +0.32$, $p < 0.05$).

\subsection{Results} \label{sec:drat-results}
We evaluate the DRAT against all existing creativity tests from \Cref{sec:results}. \Cref{fig:si-headline} reports validity and specificity for each test on $n = 21$ LLMs with full coverage of LiveIdeaBench, Arena Overall, and MMLU-Pro. The capability stack is $g = (\text{Arena Overall}, \text{MMLU-Pro})$.

\begin{figure*}[t]
\centering
\includegraphics[width=\textwidth]{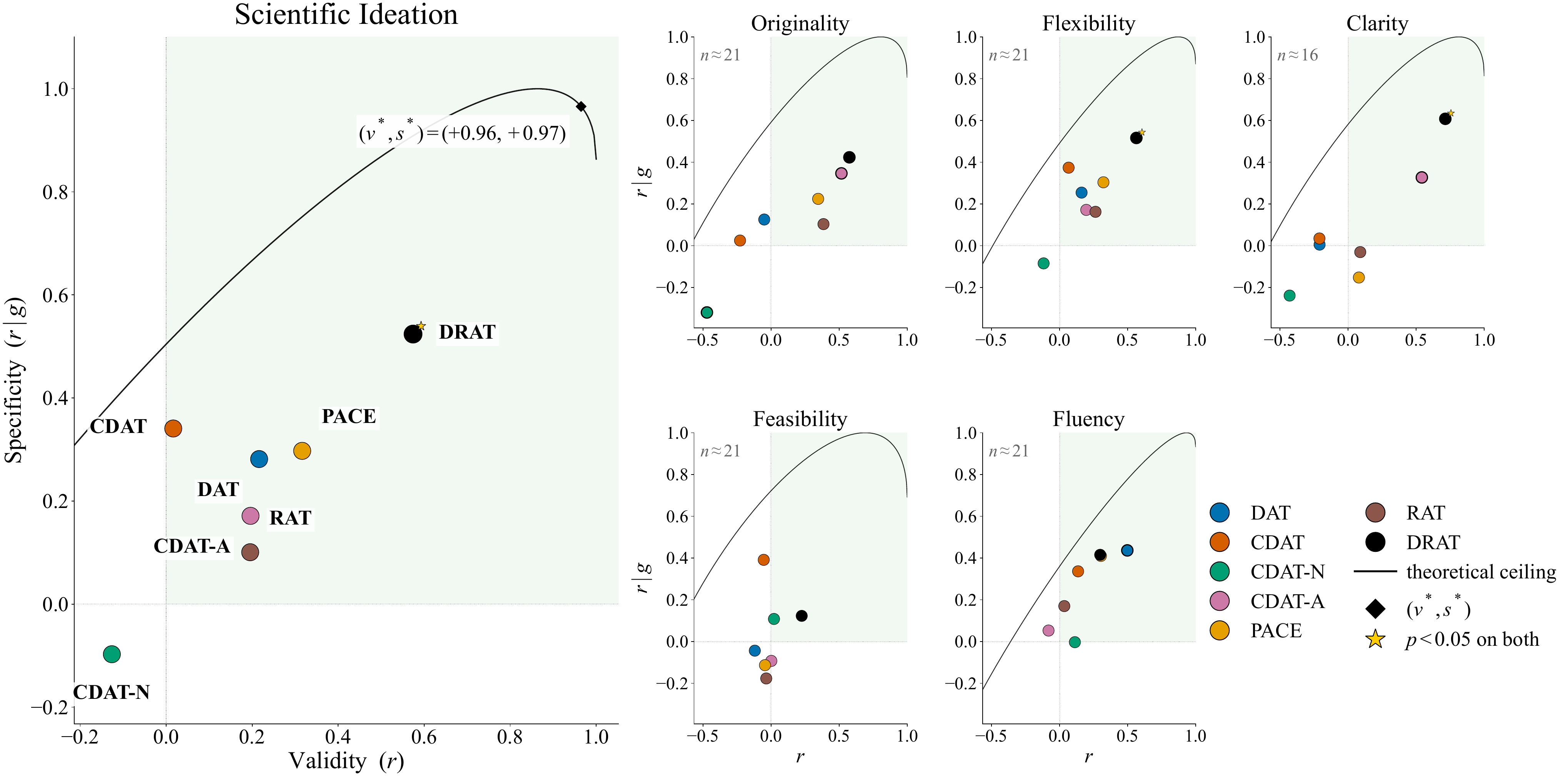}
\caption{\textbf{Results on scientific ideation}---we report both average (left) and facet-level (right) scores. \textbf{Left:} The average plot shows validity ($r$) and specificity ($r \mid g$) for each test, with the theoretical ceiling and optimum $(v^*, s^*) = \arg\max\,(v+s)$ obtained from the panel's coupling $R = -0.36$ to the capability stack $g = (\text{Arena Overall}, \text{MMLU-Pro})$, per \Cref{thm:spec-ceiling}. The DRAT achieves both significant validity and specificity in predicting scientific creativity ($r = +0.57^{**}$, $r|g = +0.52^{*}$). \textbf{Right:} This shows the facet-level analysis for each aspect of scientific creativity. The DRAT is a significant predictor of \emph{flexibility} and \emph{clarity}, and obtains significant validity on \emph{originality}, but does not achieve significance on either criterion for \emph{feasibility} or \emph{fluency}.}
\label{fig:si-headline}
\end{figure*}

\paragraph{The Divergent Remote Association Test is a significant predictor of scientific ideation ability} 

The DRAT achieves significant validity ($r = +0.57^{**}$, $p \approx 0.008$) and specificity ($r|g = +0.52^{*}$, $p \approx 0.02$) on scientific ideation scores. The overall score is the mean of five facets: \emph{Originality}, \emph{Flexibility} (whether the model's ideas span multiple scientific disciplines given the input keyword), \emph{Clarity} (how well-articulated and easy to understand the idea is), \emph{Feasibility} (technical implementability), and \emph{Fluency} (diversity and uniqueness of generated ideas across the model's full output). The DRAT achieves significant validity and specificity on two facets---Flexibility ($r = +0.56^{**}$, $r|g = +0.52^{*}$) and Clarity ($r = +0.71^{**}$, $r|g = +0.61^{*}$)---plus significant validity on Originality ($r = +0.58^{**}$, $r|g = +0.42$). On Feasibility ($r = +0.23$, n.s.) and Fluency ($r = +0.30$, n.s.), scores are low-to-moderate but non-significant. Overall, the DRAT is a significant predictor of whether a model's ideas span multiple scientific disciplines given the input keyword (flexibility), how well-articulated and easy to understand the ideas are (clarity), and how original those ideas are (originality).

\paragraph{The Divergent Remote Association Test outperforms joint prediction from convergent and divergent thinking tests} 
The design of the DRAT blends together the structure of the DAT and RAT into a single test. \emph{Would it simply have sufficed to use DAT and RAT scores jointly to predict scientific creativity?} To test this, we perform multiple regression using DAT and RAT to predict scientific ideation scores. The combination of DAT and RAT scores yields a meager $R^2 = 0.011$ ($F$-test $p = 0.93$), and adding DRAT to this regression markedly raises $R^2$ to $0.293$; with $\Delta R^2 = +0.28$ and $F(1, 13) = 5.19$ at a significant $p$-value ($p = 0.040$). The reverse---adding (DAT, RAT) on top of DRAT--- raises $R^2$ from $0.252$ to only $0.293$, with $\Delta R^2 = +0.04$ and $F(2, 13) = 0.37$ at a non-significant $p$-value ($p = 0.70$). These findings indicate that the DRAT measurement is not recoverable from any linear combination of DAT and RAT, and that assessing divergent and convergent thinking in the same test is \emph{essential} to reliably predicting scientific ideation ability.

\begin{figure*}[!t]
\centering
\includegraphics[width=0.85\textwidth]{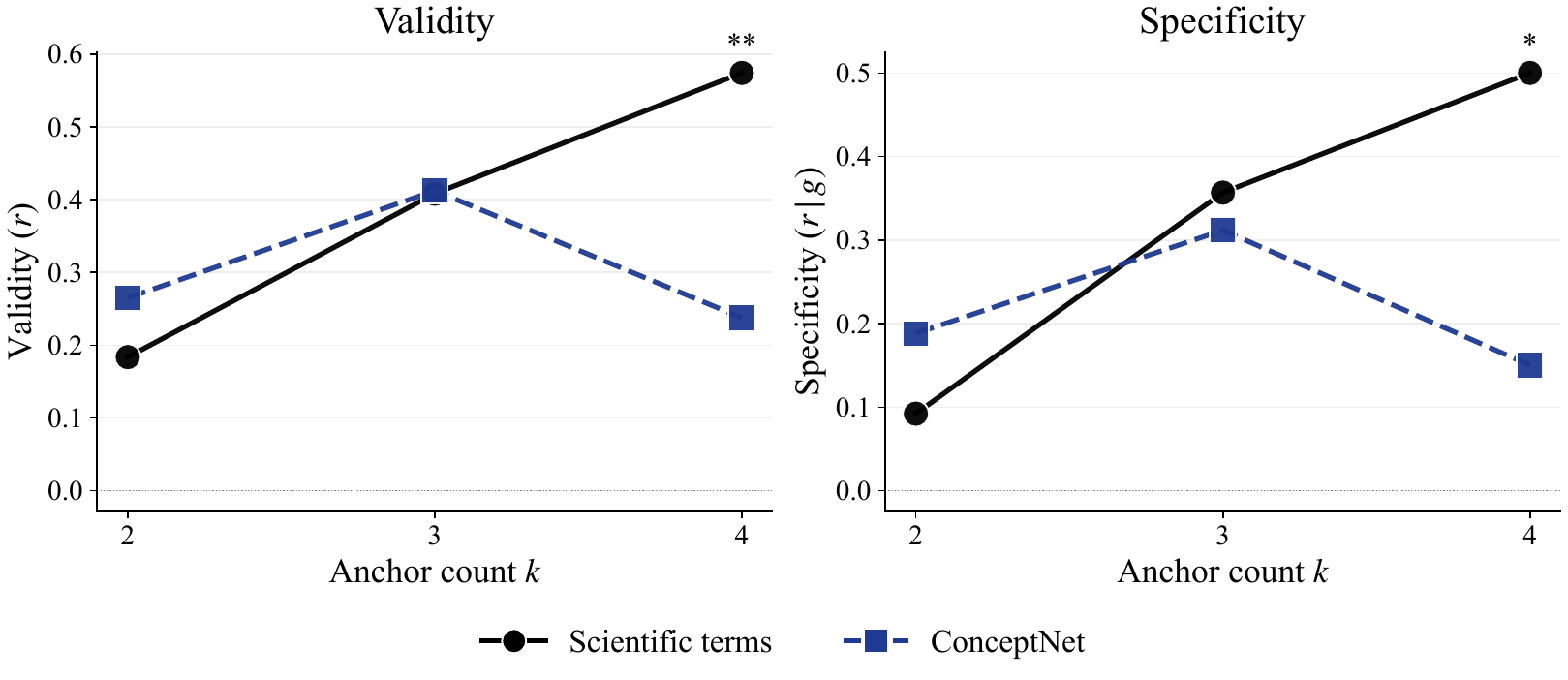}
\caption{\textbf{DRAT ablations of anchor set size and vocabulary}. Validity ($r$, left) and specificity ($r \mid g$, right) as a function of anchor count $k \in \{2,3,4\}$, under two anchor vocabulary corpora: the scientific-terms bank (black, solid) and a ConceptNet-sampled vocabulary bank (purple, dashed; $\tau < 0.20$). Stars mark significance ($^{*}p<.05$, $^{**}p<.01$). With scientific terms as anchors, both validity and specificity increase monotonically with $k$, peaking at $k = 4$ ($v = +0.57^{**}$, $s = +0.50^{*}$). With ConceptNet anchors, both metrics peak at $k = 3$ and decline at $k = 4$, indicating the use of scientific terms as anchors plays a significant role in the test's predictive power.}
\label{fig:drat_ablation}
\end{figure*}

\subsection{DRAT Robustness Ablations} \label{sec:drat-ablations}
Here, we perform ablations over the core DRAT design choices: \emph{anchor count} ($k \in \{2, 3, 4\}$) and \emph{anchor vocabulary corpus} (scientific terms vs.\ ConceptNet relation-distant). \Cref{fig:drat_ablation} reports validity and specificity for each $(k, \text{corpus})$ combination.

\paragraph{Anchor count.} We vary $k$ while holding the corpus fixed, using nested anchor sets so that the $k=2$ bank is the first two anchors of each $k=3$ bank, and the $k=3$ bank is the first three of each $k=4$ bank---making within-model $k$-comparisons directly matched. With scientific-terms anchors, both validity and specificity increase monotonically with $k$ ($v$: $+0.18 \to +0.41 \to +0.57$; $s$: $+0.09 \to +0.36 \to +0.50$), and only $k = 4$ reaches $p < 0.05$ on both axes.

\paragraph{Vocabulary corpus.} We replace the bank of scientific terms with a corpus sampled from  ConceptNet \citep{numberbatch}, and we accept an anchor set ($k=4$) only if every pairwise cosine similiarity is below $\tau = 0.20$ (see~\Cref{app:anchor_banks} for both anchor banks). The corpus of scientific terms dominates ConceptNet at $k = 4$ on LiveIdeaBench (scientific terms has $v = +0.57^{**}$, $s = +0.50^{*}$, while ConceptNet has $v = +0.24$, $s = +0.15$ at $k = 4$). 

The findings from the ablation suggest that $k=4$ anchors are better for predicting scientific creativity, and that using scientific terms as the words in each anchor set performs significantly better than using random nouns sampled from ConceptNet.

\begin{takeaway}
    The Divergent Remote Association Test (DRAT) assesses both convergent and divergent thinking simultaneously and is a significant predictor of scientific ideation ability in LLMs.
\end{takeaway}

\section{Discussion} \label{sec:discussion}

\paragraph{Related work in the design of creativity tests.} 
\citet{Boden2004TheMechanisms} famously distinguishes between three types of creativity: combinatorial, exploratory, and transformational. Recently, \citet{Nagarajan2025RollPrediction} proposed four algorithmic tasks designed to assess exploratory and combinatorial creativity ability, each of which required creative, far-sighted leaps, mirroring cognitive mechanisms implicated in the scientific process \citep{Boden2004TheMechanisms, Koestler1964TheCreation}. Later, \citet{Schapiro2025CombinatorialAbilities} proposed a constrained graph pathfinding task, where constraints reflected common pitfalls observed in large-scale LLM-for-science studies \citep{Si2024CanResearchers, Si2025TheIdeas}. \citet{wadhwa2026create} also proposed a test-time creativity evaluation (CREATE) that measured associative creativity using real knowledge graphs, scoring an LLM by the diversity and validity of multi-hop associations it generated between concepts drawn from those graphs.

\paragraph{Ontology-based distance measures.} In text-to-image creativity studies, hierarchical concept ontologies that decompose visual concepts (e.g., Sofa) into constituent parts (e.g., leg, cushion) and associated uses (e.g., support, rest) have been proposed to support combinatorial creativity in a more explicit way \citep{design2025}. Ontology-based distance measures may better capture hierarchical and logical relationships between concepts, and can be explored alongside semantic distance measures in future work.

\paragraph{Testing for transformational creativity.} One understudied type of creativity that has historically played an outsized role in economic innovation, scientific discovery, and artistic achievement is transformational creativity \citep{Boden2004TheMechanisms}, which involves altering existing conceptual spaces in ways that radically transform the scope of conceivable artifacts.\footnote{Einstein's relativity theory and Copernicus's heliocentric theory are two canonical examples \citep{Schapiro2025TransformationalTheory}.} Recent work has begun to explore both empirical evaluation of this ability \citep{sun2025omega, EscapeBench2025}, as well as its conceptual and theoretical foundations \citep{Schapiro2025TransformationalTheory}, each of which should be incorporated in future large-scale benchmarking and test design studies.

\paragraph{Limitations of tests versus open-world evaluations.} Although creativity tests offer convenient and controllable scoring, there are inherent limitations to how much variance in creative achievement minimal assessments can capture, due to the complex, long-horizon, and open-ended nature of real-world tasks. \citet{kapoor2026openworld} has recently called for \emph{open-world evaluations}: ``long-horizon, messy, real-world tasks assessed through small-sample qualitative analysis rather than benchmark-scale automation.'' Although in some respects, creativity can be reduced to cognitive primitives and concrete mechanisms \citep{bvsr_update}, real-world creative processes are complex and typically modeled using distinct stages \citep{wallas1926art}. Accordingly, creative achievement may be best measured through a combination of minimal tests and open-ended, long-horizon assessments.

\subsection{Future Work}
The framework, findings, and test introduced in this work open several important directions for future work. First, mechanistic interpretability---the AI subfield concerned with understanding the relationship between model internals and model behavior \citep{sharkey2025open}---can leverage the structure of creativity tests as guides for identifying circuits in model weights and activations that are responsible for creative behavior, analogous to studies that have identified circuits for arithmetic abilities \citep{nanda2023grokking}. Our findings also support more careful use of creativity tests as held-out measures of creativity during pre-training and fine-tuning, now with a deeper understanding of each test's strengths and limitations. Ultimately, creativity tests can play a significant role in our ability to design novel objectives and architectures that lead to more creative models and systems, especially as future work begins addressing ways to assess transformational creativity.
\section{Limitations} \label{sec:limitations}
Several limitations of our work are worth discussing. First, our work has exclusively studied \emph{automated} creativity assessments due to ease of evaluation; we make no claims about subjectively-scored creativity assessments like the Alternative Uses Test \citep{Guilford1956PsychologicalINTELLECT, Koivisto2023BestTask, Stevenson2022PuttingTest} or the Torrance Tests of Creative Thinking \citep{ttct}. Moreover, some benchmarks we used have $n < 25$ models due to limited coverage of existing models (see~\Cref{tab:correlations}). Validity and specificity should be interpreted cautiously here, and future work can aim to expand benchmark coverage to ensure greater statistical reliability. Although we study a large set of LLMs, this pool consists exclusively of instruction-tuned LLMs, and our findings may not generalize to base models, as post-training often reduces diversity~\citep{yue2025does}. Lastly, while the Divergent Remote Association Test is a significant predictor of most facets of scientific creativity, it does not reliably predict \emph{feasibility}: how realistic an idea would be to implement. Since recent studies have found that AI-generated ideas tend to be more difficult to implement than human ideas \citep{Wang2024SciMON:Novelty, Si2025TheIdeas}, it would be worthwhile to design creativity tests that are reliable predictors of feasibility in the future.
\section{Conclusion} \label{sec:conclusion}
We conducted the first systematic study assessing the effectiveness of automatic creativity tests in predicting creative achievement across three constructs: creative writing, divergent thinking, and scientific ideation. We introduced two evaluation criteria---validity and specificity---and proved a theoretical bound on the maximum specificity any test can achieve as a function of its validity and the benchmark's correlation with general capability. Empirically, we found that the Divergent Association Test (DAT) best predicts creative writing and the Conditional DAT best predicts divergent thinking, but that no single test predicts all constructs well, and that none of the existing tests reliably predicts scientific ideation. We then introduced the Divergent Remote Association Test (DRAT), a hybrid of the Remote Associates Test and the Divergent Association Task that is the first significant predictor of scientific ideation ability. Our findings provide practical guidance on which constructs each test is suited to predict, characterize the theoretical headroom for future tests, and propose a new test that advances our ability to predict scientific creativity in LLMs.

\section{Acknowledgements}
The authors thank Roger E. Beaty, Babak Hemmatian, and Lav R. Varshney for helpful feedback in preparing the manuscript. 

A.G: This material is based upon work supported by the National Science Foundation Graduate Research Fellowship Program under Grant No. DGE 21-46756. Any opinions, findings, and conclusions or
recommendations expressed in this material are those of the author(s) and do not necessarily reflect
the views of the National Science Foundation.

S.S: This material is based upon work supported by the Lambda Grant Research Program.

\bibliographystyle{styles/icml2026}
\bibliography{main}

\newpage
\appendix
\newpage
\etocdepthtag.toc{appendix}
\etocsettagdepth{mainmatter}{none}
\etocsettagdepth{appendix}{subsection}
\etocsetnexttocdepth{subsection}
\etocsettocstyle{\section*{Appendix Contents}\vspace{2pt}}{\vspace{6pt}}
\tableofcontents
\bigskip

\section{Derivation of the Validity-Specificity Frontier}
\label{app:spec-bound}

The theoretical ceilings plotted in Figure~\ref{fig:headline} (benchmark validity-specificity frontier) are built on a covariance matrix bound that we prove here.

\noindent\textbf{Theorem~\ref{thm:spec-ceiling} (Validity-Specificity Frontier, restated).}\;\emph{Let $g = (Z_1, \dots, Z_k)$ be a vector of general capability proxies, let $\hat{Y}_g$ denote the ordinary least squares prediction of $Y$ from $g$, and define $R = \mathrm{corr}(Y, \hat{Y}_g)$. For any test $X$ with validity $v = \mathrm{corr}(X, Y)$, the semi-partial correlation $r(X, Y \mid g) := r(X,\, Y - \hat{Y}_g)$ satisfies}
\begin{equation*}
    |r(X, Y \mid g)| \;\leq\; v\sqrt{1 - R^2} \,+\, |R|\,\sqrt{1 - v^2}.
\end{equation*}

\begin{proof}
Without loss of generality, we standardize $X$ and $Y$ to have unit variance, and let $\tilde{Z} := \hat{Y}_g / R$ so that $\sigma(\tilde{Z}) = 1$ as well. Since correlation is scale-invariant, $\mathrm{corr}(X, \tilde{Z}) = \mathrm{corr}(X, \hat{Y}_g) =: a$ and $\mathrm{corr}(Y, \tilde{Z}) = R$. The covariance matrix of the unit-variance triple $(X, Y, \tilde{Z})$,
\[
\Sigma \;=\;
\begin{pmatrix}
1 & v & a \\
v & 1 & R \\
a & R & 1
\end{pmatrix},
\]
must be positive semi-definite. Expanding the determinant gives
\begin{align}
    \det(\Sigma) &= 1 + 2vRa - a^2 - v^2 - R^2 \;\ge\; 0\\
    & \implies \underbrace{1}_{\alpha} a^2 \;-\; \underbrace{2vR}_{\beta} a \;+\; \underbrace{(R^2 + v^2 - 1)}_{\tau} \;\leq\; 0. \label{eq:quadratic}
\end{align}
Applying the quadratic formula to Equation~\eqref{eq:quadratic} with coefficients $\alpha, \beta, \tau$ yields
\begin{equation}
    a \;\in\; \bigl[\,vR - \delta,\;\; vR + \delta\,\bigr],
    \qquad \delta \;:=\; \sqrt{(1 - R^2)(1 - v^2)}. \label{eq:a-range}
\end{equation}
Finally, applying this into the semi-partial correlation: 
\begin{align}
    r(X, Y \mid g) &= r(X, Y - \hat{Y}_g) \\
    &= \frac{\textrm{Cov}(X, Y - \hat{Y}_g)}{\sigma(X)\,\sigma(Y - \hat{Y}_g)} \\
    &= \frac{\textrm{Cov}(X, Y) - \textrm{Cov}(X, \hat{Y}_g)}{\sigma(Y - \hat{Y}_g)} \quad \quad \quad\quad\quad\quad\quad\text{($X$ has unit variance)} \\
    &= \frac{v - a\,\sigma(\hat{Y}_g)}{\sigma(Y - \hat{Y}_g)}. \label{eq:semipartial-pre}
\end{align}
The last step substitutes $v := \textrm{Cov}(X, Y)$ and applies $\textrm{Cov}(X, \hat{Y}_g) = \textrm{corr}(X, \hat{Y}_g) ~\sigma(X)~ \sigma(\hat{Y}_g) = a \cdot 1 \cdot \sigma(\hat{Y}_g) = a\,\sigma(\hat{Y}_g)$, where the second equality uses $\textrm{corr}(X, \hat{Y}_g) = a$ (by definition) and $\sigma(X) = 1$ (since we standardized $X$). 

From here, it remains to address $\sigma(\hat{Y}_g)$ and $\sigma(Y - \hat{Y}_g)$. The former follows directly from our definition of $R := \mathrm{corr}(Y, \hat{Y}_g)$, along with the fact that least-squares residuals are orthogonal to their predictor (i.e, $\textrm{Cov}(Y - \hat{Y}_g,\, \hat{Y}_g) = 0$):
\begin{align}
R &= \mathrm{corr}(Y, \hat{Y}_g)  \\
  &= \frac{\textrm{Cov}(Y, \hat{Y}_g)}{\sigma(Y)\,\sigma(\hat{Y}_g)} \\
  &= \frac{\textrm{Cov}(\hat{Y}_g,\, \hat{Y}_g) + \textrm{Cov}(Y - \hat{Y}_g,\, \hat{Y}_g)}{\sigma(Y)\,\sigma(\hat{Y}_g)} \\
  &= \frac{\textrm{Var}(\hat{Y}_g)}{\sigma(Y)\,\sigma(\hat{Y}_g)}
   \quad \quad (\textrm{Cov}(Y - \hat{Y}_g,\, \hat{Y}_g) = 0 \text{ by least-squares orthogonality}) \\
  &= \frac{\sigma(\hat{Y}_g)}{\sigma(Y)} = \sigma(\hat{Y}_g)
   \quad \quad (\textrm{Var}(\hat{Y}_g) = \sigma^2(\hat{Y}_g) \text{ and } \sigma(Y) = 1).
\end{align}
Hence $\sigma(\hat{Y}_g) = R$ and $\sigma^2(\hat{Y}_g) = R^2$. Then, expanding $\sigma(Y - \hat{Y}_g)$ yields:
\begin{align}
    \sigma(Y - \hat{Y}_g) &= \sqrt{\textrm{Var}(Y - \hat{Y}_g)} \\
    &= \sqrt{\textrm{Var}(Y) + \textrm{Var}(\hat{Y}_g) - 2 \textrm{Cov}(Y, \hat{Y}_g)} \\
    &= \sqrt{1 + R^2 - 2\cdot \textrm{corr}(Y, \hat{Y}_g) \cdot \sigma(\hat{Y}_g) \cdot \sigma(Y)} \quad \quad (\textrm{Since $\tfrac{\textrm{Cov}(Y, \hat{Y}_g)}{\sigma(Y) \sigma(\hat{Y}_g)} = \textrm{corr}(Y, \hat{Y}_g) =: R$})\\
    &= \sqrt{1 + R^2 - 2R^2} \\
    &= \sqrt{1 - R^2}
\end{align}

Substituting back into \eqref{eq:semipartial-pre} and combining with the bounds on $a$ from Equation~\eqref{eq:a-range}, we obtain
\begin{align}
    r(X, Y | g)&= \frac{v - aR}{\sqrt{1 - R^2}} \\
    &\in \left[\,\frac{v - (vR + \delta)R}{\sqrt{1 - R^2}},\;\; \frac{v - (vR - \delta)R}{\sqrt{1 - R^2}}\,\right] \quad\quad\quad\quad\quad\text{(from Eq.~\eqref{eq:a-range})} \\
    &= \bigl[\,v\sqrt{1 - R^2} - |R|\sqrt{1 - v^2},\;\; v\sqrt{1 - R^2} + |R|\sqrt{1 - v^2}\,\bigr],
\end{align}
which gives the bound in Equation~\eqref{eq:spec-ceiling}.
\end{proof}

\paragraph{Restated Remark on~\Cref{thm:spec-ceiling}.} In particular, a perfectly valid test ($v = 1$) has specificity bounded by $\sqrt{1 - R^2}$; a benchmark with $R$ near $\pm 1$ therefore leaves almost no specificity headroom for \emph{any} test, regardless of construction. This is why Arena CW ($R = 0.98$ on $g = (\text{Arena Overall}, \text{MMLU-Pro})$) caps specificity at $\approx 0.20$ for a perfect-validity test, while NoveltyBench Utility ($R \approx -0.33$) admits a much wider ceiling (Figure~\ref{fig:headline}, bottom row).

\section{Per-Model Test Scores} \label{app:per-model-scores}
In Table~\ref{tab:per-model-scores}, we report the mean~$\pm$~SEM scores for the DAT, CDAT, CDAT-N, CDAT-A, and PACE per model. DAT is scored under GloVe across all valid trials at $T \in \{1.0, 1.5, 2.0\}$ (at 120 total trials per model). CDAT, CDAT-N, and CDAT-A are scored under Sentence-BERT \texttt{all-mpnet-base-v2}, where CDAT is novelty restricted to the temperatures passing the false discovery rate $\alpha\!=\!.001$ appropriateness gate (``---'' if no temperature passed), and CDAT-N and CDAT-A are the ungated novelty and appropriateness scores aggregated over all 50 cues at all three temperatures. PACE is scored under FastText across all valid chains (up to 150 chains per model: 50 seeds $\times$ 3 chains).

As a sanity check, our per-model mean~$\pm$~SEM values fall within the score ranges reported in the original papers. PACE across our 54 models spans $\sim$0.65--0.76, overlapping the $\sim$0.69--0.83 range reported by \citet{pace} across their 30-model set. Gated CDAT novelty across our models spans $\sim$62--75, within the $\sim$55--75 band shown in Figure~3 of \citet{nakajima2026beyond}. Our evaluation extends both sets to a larger number of models.

\sffamily\scriptsize
\setlength{\tabcolsep}{3pt}
\begin{longtable}{@{}lrrrrrrr@{}}
\caption{Per-model scores on DAT, CDAT, CDAT-N, CDAT-A, PACE, RAT, and DRAT, grouped by provider. DAT, CDAT, CDAT-N, CDAT-A, PACE, and DRAT are reported as mean~$\pm$~SEM (DAT under GloVe; CDAT/CDAT-N/CDAT-A under SBERT; PACE under FastText; DRAT in its headline $k = 4$ expert configuration under SBERT, across the 30 anchor groups). RAT is reported as zero-shot strict accuracy (\%) on the 30-item normed bank, with no SEM. ``---'' indicates cells where no valid scores were collected (e.g., no temperature passed the appropriateness gate on CDAT, or model responses were otherwise invalid).}
\label{tab:per-model-scores} \\
\toprule
Model & DAT & CDAT & CDAT-N  & CDAT-A  & PACE & RAT & DRAT \\
\midrule
\endfirsthead
\multicolumn{8}{@{}l}{\textit{(continued from previous page)}} \\
\toprule
Model & DAT  & CDAT  & CDAT-N  & CDAT-A  & PACE & RAT & DRAT \\
\midrule
\endhead
\midrule
\multicolumn{8}{r@{}}{\textit{(continued on next page)}} \\
\endfoot
\bottomrule
\endlastfoot
\multicolumn{8}{@{}l}{\textit{OpenAI}} \\
\texttt{gpt-3-5-turbo} & 78.24$\pm$0.78 & 73.07$\pm$0.54 & 72.66$\pm$0.39 & 132.92$\pm$0.63 & 0.715$\pm$0.004 & 80 & 9.72$\pm$4.62 \\
\texttt{gpt-4-1} & 86.29$\pm$0.27 & 69.23$\pm$0.69 & 70.10$\pm$0.42 & 140.86$\pm$0.55 & 0.744$\pm$0.002 & 97 & 46.11$\pm$6.09 \\
\texttt{gpt-4-1-mini} & 81.60$\pm$0.49 & 66.88$\pm$0.88 & 67.75$\pm$0.58 & 143.78$\pm$0.66 & 0.730$\pm$0.003 & 80 & 51.27$\pm$5.79 \\
\texttt{gpt-4-1-nano} & 81.93$\pm$1.33 & 72.54$\pm$1.01 & 71.28$\pm$0.72 & 137.84$\pm$0.94 & 0.707$\pm$0.004 & 47 & 23.54$\pm$6.21 \\
\texttt{gpt-4-turbo} & 84.84$\pm$1.54 & 66.49$\pm$0.90 & 66.92$\pm$0.57 & 144.02$\pm$0.67 & 0.732$\pm$0.003 & 93 & 29.65$\pm$6.32 \\
\texttt{gpt-4o} & 82.94$\pm$1.15 & 65.14$\pm$1.02 & 66.26$\pm$0.69 & 144.54$\pm$0.78 & 0.729$\pm$0.002 & 93 & 36.41$\pm$6.34 \\
\texttt{gpt-4o-mini} & 78.70$\pm$1.34 & 70.53$\pm$0.76 & 71.52$\pm$0.46 & 138.41$\pm$0.69 & 0.707$\pm$0.003 & 50 & 19.19$\pm$5.92 \\
\texttt{gpt-5} & 89.33$\pm$0.21 & 69.85$\pm$0.85 & 69.77$\pm$0.52 & 141.96$\pm$0.62 & 0.747$\pm$0.002 & 93 & 16.68$\pm$5.63 \\
\texttt{gpt-5-4} & 91.72$\pm$0.19 & 68.28$\pm$0.88 & 68.63$\pm$0.49 & 143.70$\pm$0.54 & 0.727$\pm$0.003 & 93 & 51.99$\pm$5.35 \\
\texttt{gpt-5-4-mini} & 84.06$\pm$0.32 & 65.21$\pm$0.83 & 65.47$\pm$0.52 & 145.88$\pm$0.59 & 0.734$\pm$0.002 & 73 & 33.37$\pm$6.64 \\
\texttt{gpt-5-4-nano} & 83.77$\pm$0.29 & 63.20$\pm$1.11 & 63.15$\pm$0.58 & 147.61$\pm$0.64 & 0.678$\pm$0.004 & 33 & 69.11$\pm$3.53 \\
\texttt{gpt-5-mini} & 82.92$\pm$0.36 & 67.90$\pm$0.86 & 68.02$\pm$0.49 & 143.90$\pm$0.57 & 0.741$\pm$0.003 & 97 & 42.27$\pm$6.44 \\
\texttt{gpt-5-nano} & 80.66$\pm$0.32 & 62.39$\pm$1.04 & 63.11$\pm$0.59 & 147.95$\pm$0.63 & 0.712$\pm$0.003 & 93 & 49.27$\pm$5.57 \\
\texttt{o3} & 89.46$\pm$0.24 & 70.09$\pm$0.78 & 70.06$\pm$0.44 & 142.25$\pm$0.54 & 0.748$\pm$0.004 & 93 & 33.28$\pm$6.63 \\
\texttt{o3-mini} & 76.37$\pm$0.42 & 65.32$\pm$1.03 & 65.67$\pm$0.61 & 144.17$\pm$0.71 & 0.715$\pm$0.003 & 70 & 24.69$\pm$6.05 \\
\texttt{o4-mini} & 84.44$\pm$0.36 & 68.32$\pm$0.81 & 68.77$\pm$0.46 & 142.61$\pm$0.58 & 0.732$\pm$0.003 & 93 & 44.32$\pm$6.30 \\
\midrule
\multicolumn{8}{@{}l}{\textit{Anthropic}} \\
\texttt{claude-3-5-haiku} & 87.36$\pm$0.22 & 66.62$\pm$0.76 & 67.95$\pm$0.43 & 141.52$\pm$0.52 & 0.726$\pm$0.002 & 63 & 38.35$\pm$6.69 \\
\texttt{claude-3-haiku} & 78.87$\pm$0.23 & 62.85$\pm$0.94 & 62.53$\pm$0.58 & 146.22$\pm$0.65 & 0.697$\pm$0.005 & 70 & 28.77$\pm$6.57 \\
\texttt{claude-haiku-4-5} & 85.74$\pm$0.26 & 67.61$\pm$0.90 & 67.80$\pm$0.51 & 143.71$\pm$0.65 & 0.667$\pm$0.010 & 83 & 59.98$\pm$4.44 \\
\texttt{claude-opus-4-5} & 89.26$\pm$0.21 & 69.71$\pm$0.75 & 69.90$\pm$0.38 & 143.03$\pm$0.46 & 0.742$\pm$0.003 & 87 & 47.45$\pm$6.25 \\
\texttt{claude-opus-4-6} & 89.70$\pm$0.97 & --- & 67.22$\pm$0.00 & 148.68$\pm$0.00 & 0.750$\pm$0.002 & 87 & 54.44$\pm$5.64 \\
\texttt{claude-sonnet-4} & 86.69$\pm$0.16 & 70.04$\pm$0.59 & 69.78$\pm$0.37 & 141.69$\pm$0.53 & 0.739$\pm$0.004 & 97 & 28.77$\pm$6.57 \\
\texttt{claude-sonnet-4-5} & 86.67$\pm$0.17 & 66.93$\pm$0.84 & 66.43$\pm$0.48 & 146.01$\pm$0.56 & 0.756$\pm$0.002 & 90 & 54.22$\pm$5.60 \\
\texttt{claude-sonnet-4-6} & 88.97$\pm$0.21 & --- & --- & --- & 0.755$\pm$0.002 & 87 & 53.94$\pm$5.60 \\
\midrule
\multicolumn{8}{@{}l}{\textit{Google}} \\
\texttt{gemini-2-0-flash-001} & 82.32$\pm$0.41 & 70.72$\pm$0.68 & 70.32$\pm$0.41 & 139.44$\pm$0.55 & 0.730$\pm$0.003 & 87 & 38.80$\pm$6.77 \\
\texttt{gemini-2-5-flash} & 76.68$\pm$0.54 & 68.54$\pm$0.85 & 68.54$\pm$0.49 & 143.39$\pm$0.58 & 0.742$\pm$0.002 & 77 & 39.76$\pm$6.49 \\
\texttt{gemini-2-5-pro} & 89.69$\pm$0.25 & 71.18$\pm$0.59 & 71.13$\pm$0.34 & 139.02$\pm$0.49 & 0.761$\pm$0.002 & 93 & 0.00$\pm$0.00 \\
\texttt{gemma-2-27b-it} & 81.87$\pm$0.44 & 70.46$\pm$2.10 & 70.92$\pm$0.61 & 138.29$\pm$0.85 & 0.694$\pm$0.008 & 50 & 26.38$\pm$6.46 \\
\texttt{gemma-2-9b-it} & 77.89$\pm$1.52 & 74.09$\pm$0.62 & 72.77$\pm$0.51 & 133.71$\pm$0.73 & 0.728$\pm$0.003 & --- & --- \\
\texttt{gemma-3-27b-it} & 86.49$\pm$0.25 & 71.75$\pm$0.59 & 72.05$\pm$0.33 & 137.58$\pm$0.51 & 0.728$\pm$0.005 & 73 & 19.58$\pm$6.04 \\
\midrule
\multicolumn{8}{@{}l}{\textit{Meta}} \\
\texttt{llama-3-1-70b-instruct} & 84.78$\pm$2.07 & 71.10$\pm$1.06 & 68.19$\pm$0.80 & 141.56$\pm$0.98 & 0.713$\pm$0.004 & 83 & 12.46$\pm$5.18 \\
\texttt{llama-3-1-8b-instruct} & 79.88$\pm$3.14 & --- & 72.99$\pm$0.78 & 134.84$\pm$1.09 & 0.701$\pm$0.006 & 50 & 24.72$\pm$6.50 \\
\texttt{llama-3-2-1b-instruct} & 81.20$\pm$0.25 & 52.46$\pm$5.27 & 58.92$\pm$1.33 & 146.46$\pm$1.67 & 0.586$\pm$0.017 & 0 & 6.56$\pm$3.72 \\
\texttt{llama-3-2-3b-instruct} & 84.11$\pm$0.14 & 64.42$\pm$2.47 & 68.31$\pm$0.50 & 139.37$\pm$0.67 & 0.711$\pm$0.005 & 33 & 2.13$\pm$2.13 \\
\texttt{llama-3-3-70b-instruct} & 82.47$\pm$2.04 & 68.36$\pm$1.71 & 69.81$\pm$0.63 & 139.24$\pm$0.81 & 0.718$\pm$0.004 & 80 & 9.78$\pm$4.63 \\
\texttt{llama-4-maverick} & 85.28$\pm$0.26 & 67.34$\pm$0.63 & 67.45$\pm$0.44 & 141.98$\pm$0.56 & 0.707$\pm$0.005 & 77 & 35.43$\pm$6.61 \\
\texttt{llama-4-scout} & 84.48$\pm$1.05 & 66.90$\pm$0.84 & 67.39$\pm$0.50 & 141.42$\pm$0.61 & 0.696$\pm$0.005 & 73 & 18.52$\pm$5.71 \\
\midrule
\multicolumn{8}{@{}l}{\textit{Mistral}} \\
\texttt{mistral-7b-instruct-v0-1} & 81.20$\pm$0.01 & 69.13$\pm$0.96 & 69.16$\pm$0.54 & 135.43$\pm$0.56 & 0.613$\pm$0.011 & 17 & 10.11$\pm$4.80 \\
\texttt{mistral-large-2407} & 88.15$\pm$0.24 & 70.71$\pm$0.56 & 70.36$\pm$0.34 & 138.80$\pm$0.55 & 0.737$\pm$0.002 & 97 & 23.76$\pm$6.27 \\
\texttt{mistral-large-2411} & 81.91$\pm$0.40 & 64.87$\pm$0.82 & 64.54$\pm$0.49 & 146.01$\pm$0.54 & 0.722$\pm$0.003 & 73 & 31.04$\pm$6.62 \\
\texttt{mistral-nemo} & 78.17$\pm$3.01 & --- & 71.09$\pm$0.95 & 137.00$\pm$1.28 & 0.709$\pm$0.005 & 43 & 19.57$\pm$6.04 \\
\texttt{mistral-small-24b-instruct-2501} & 82.39$\pm$2.09 & --- & 73.56$\pm$0.76 & 132.24$\pm$0.94 & 0.717$\pm$0.003 & 70 & 4.61$\pm$3.21 \\
\midrule
\multicolumn{8}{@{}l}{\textit{Qwen}} \\
\texttt{qwen-2-5-72b-instruct} & 72.28$\pm$2.64 & 68.89$\pm$2.00 & 68.98$\pm$0.82 & 138.68$\pm$0.92 & 0.703$\pm$0.004 & 67 & 23.81$\pm$6.26 \\
\texttt{qwen3-14b} & 81.36$\pm$1.76 & --- & 73.89$\pm$1.20 & 131.84$\pm$1.70 & 0.606$\pm$0.017 & 47 & 12.62$\pm$5.21 \\
\texttt{qwen3-235b-a22b} & 84.95$\pm$0.45 & 68.65$\pm$0.91 & 68.63$\pm$0.62 & 142.18$\pm$0.78 & 0.725$\pm$0.003 & 90 & 68.43$\pm$0.00 \\
\texttt{qwen3-32b} & 85.18$\pm$1.57 & --- & 74.54$\pm$1.34 & 135.62$\pm$1.72 & 0.658$\pm$0.019 & 70 & 23.39$\pm$8.12 \\
\texttt{qwen3-8b} & 83.31$\pm$0.35 & 67.53$\pm$0.80 & 68.53$\pm$0.45 & 141.87$\pm$0.60 & 0.694$\pm$0.004 & 40 & 34.50$\pm$6.44 \\
\texttt{qwq-32b} & 82.63$\pm$0.50 & --- & --- & --- & --- & --- & --- \\
\midrule
\multicolumn{8}{@{}l}{\textit{DeepSeek}} \\
\texttt{deepseek-chat} & 81.12$\pm$0.40 & 68.33$\pm$0.75 & 67.50$\pm$0.53 & 143.11$\pm$0.71 & 0.729$\pm$0.003 & 80 & 32.51$\pm$6.48 \\
\texttt{deepseek-chat-v3-0324} & 80.14$\pm$1.81 & 68.36$\pm$0.90 & 68.00$\pm$0.55 & 143.13$\pm$0.68 & 0.730$\pm$0.004 & 73 & 0.00$\pm$0.00 \\
\texttt{deepseek-r1} & 83.80$\pm$0.39 & 68.75$\pm$0.72 & 69.28$\pm$0.45 & 141.38$\pm$0.65 & 0.720$\pm$0.003 & 60 & 21.24$\pm$6.04 \\
\midrule
\multicolumn{8}{@{}l}{\textit{Cohere}} \\
\texttt{command-a} & 82.28$\pm$0.33 & 62.99$\pm$1.12 & 63.22$\pm$0.64 & 147.80$\pm$0.66 & 0.714$\pm$0.003 & 73 & --- \\
\texttt{command-r-plus-08-2024} & 87.69$\pm$0.45 & 69.61$\pm$0.83 & 69.44$\pm$0.49 & 138.70$\pm$0.64 & 0.722$\pm$0.003 & 83 & --- \\
\midrule
\multicolumn{8}{@{}l}{\textit{NVIDIA}} \\
\texttt{llama-3-1-nemotron-70b-instruct} & 84.38$\pm$2.71 & 70.27$\pm$3.86 & 69.62$\pm$0.76 & 140.32$\pm$0.93 & 0.638$\pm$0.013 & --- & --- \\
\midrule
\multicolumn{8}{@{}l}{\textit{Microsoft}} \\
\texttt{phi-4} & 74.29$\pm$3.24 & --- & 72.71$\pm$0.64 & 134.34$\pm$1.03 & 0.680$\pm$0.006 & 63 & 19.28$\pm$5.94 \\
\end{longtable}
\normalsize\normalfont

\section{Per-Model Benchmark Scores} \label{app:per-model-benchmarks}
Table~\ref{tab:per-model-benchmarks} reports each model's score on every external benchmark used in this study. The general capability proxies are Arena Overall (Chatbot Arena Elo) and MMLU-Pro accuracy~\citep{wang2024mmlupro}, the creative writing benchmarks are Arena CW (Elo), EQ-Bench CW v3 (Elo)~\citep{paech2023eqbench}, and Mazur CW~\citep{mazur2025writing}, the divergent thinking (output-diversity) benchmarks are Hivemind diversity ($1 - $ intra-model cosine similarity)~\citep{hivemind} and NoveltyBench Utility (NovB.; cumulative-utility score)~\citep{zhang2025noveltybench}, and the scientific ideation benchmark is the LiveIdeaBench~\citep{ruan2024liveideabench} \emph{idea score average} across five dimensions: originality, flexibility, feasibility, clarity, and fluency. Cells marked ``---'' are missing because the corresponding benchmark does not score that model---future work can expand the coverage of these benchmarks to improve the statistical robustness of observed correlations.

\paragraph{Mazur CW snapshot.} Mazur CW scores are transcribed from commit \texttt{80b7f17} (an absolute 0--10 mean-rubric leaderboard with 50 graded models). The leaderboard has since been regraded with a new ensemble of judges, eventually deprecating the absolute-rating system entirely in favor of a pairwise Thurstone ranking. We retained this snapshot rather than the latest version because it covers the largest set of models intersecting our evaluation pool ($n = 20$ overlap, vs.\ $\leq 10$ for any later snapshot), keeping per-cell sample sizes comparable to the other creative writing benchmarks.

\paragraph{MMLU-Pro source.} MMLU-Pro accuracies are taken from the \href{https://huggingface.co/spaces/TIGER-Lab/MMLU-Pro}{TIGER-Lab MMLU-Pro leaderboard} (CSV at \href{https://huggingface.co/datasets/TIGER-Lab/mmlu_pro_leaderboard_submission}{\texttt{TIGER-Lab/mmlu\_pro\_leaderboard\_submission}}), which is methodologically tied to the original MMLU-Pro paper~\citep{wang2024mmlupro}. Models without a leaderboard entry receive no MMLU-Pro value and are excluded from the specificity computation for that cell, which is why the per-benchmark specificity sample sizes are smaller than the validity sample sizes (Table~\ref{tab:correlations}).

\paragraph{Hivemind mean-similarity estimate.} \citet{hivemind} does not publish a mean intra-model similarity for each model. Instead, Table 6 in their paper reports, for each of 79 models, the percentage $p_b$ of response pairs whose pairwise cosine similarity falls into each of ten bins covering $[0, 1]$ in $0.1$-wide steps. We estimate the per-model mean similarity as the bin-midpoint-weighted sum, and report ``Hivemind diversity'' as $1 - \text{intra-sim}$.\footnote{This is exact when within-bin similarity values are concentrated at the bin midpoint and approximate (with worst-case error $\pm 0.05$) under uniform within-bin distributions; the rank order of models is, nonetheless, preserved unless within-bin distributions differ substantially across models}, where intra-sim is given by:
\begin{equation}
    \text{intra-sim} \;:=\; \tfrac{1}{100} \sum_{b=1}^{10} p_b \cdot m_b,
    \qquad m_b \in \{0.95, 0.85, \dots, 0.05\},
\end{equation}

\sffamily\footnotesize
\setlength{\tabcolsep}{4pt}
\begin{longtable}{@{}lcccccccc@{}}
\caption{\textbf{Per-model benchmark scores.} Columns: Arena Overall (Elo), MMLU-Pro (accuracy), Arena CW (Elo), EQ-Bench CW v3 (Elo), Mazur CW, Hivemind diversity, NoveltyBench Utility, LiveIdeaBench (5-dim Average). ``---'' indicates the corresponding benchmark does not score that model.}
\label{tab:per-model-benchmarks} \\
\toprule
Model & Arena Ovr & MMLU-Pro & Arena CW & EQ-B. CW & Mazur & Hive. & NovB. & LiveIdea \\
\midrule
\endfirsthead
\multicolumn{9}{@{}l}{\textit{(continued from previous page)}} \\
\toprule
Model & Arena Ovr & MMLU-Pro & Arena CW & EQ-B. CW & Mazur & Hive. & NovB. & LiveIdea \\
\midrule
\endhead
\midrule
\multicolumn{9}{r@{}}{\textit{(continued on next page)}} \\
\endfoot
\bottomrule
\endlastfoot
\multicolumn{9}{@{}l}{\textit{OpenAI}} \\
\texttt{gpt-3-5-turbo} & 1223 & --- & 1187 & 519 & --- & --- & --- & --- \\
\texttt{gpt-4-1} & 1413 & 0.82 & 1402 & 1419 & --- & --- & --- & --- \\
\texttt{gpt-4-1-mini} & 1382 & --- & 1349 & 1231 & --- & --- & --- & --- \\
\texttt{gpt-4-1-nano} & 1321 & --- & 1306 & 1034 & --- & --- & --- & --- \\
\texttt{gpt-4-turbo} & 1323 & 0.64 & 1322 & --- & --- & 0.14 & --- & 6.56 \\
\texttt{gpt-4o} & 1443 & 0.75 & 1423 & 1484 & 8.18 & 0.12 & 3.27 & 6.69 \\
\texttt{gpt-4o-mini} & 1317 & 0.63 & 1295 & 950 & 6.72 & 0.11 & 3.11 & 6.25 \\
\texttt{gpt-5} & 1433 & 0.87 & 1376 & 1301 & 8.60 & --- & --- & --- \\
\texttt{gpt-5-4} & 1466 & 0.88 & 1429 & 2019 & --- & --- & --- & --- \\
\texttt{gpt-5-4-mini} & 1459 & --- & 1417 & 1726 & --- & --- & --- & --- \\
\texttt{gpt-5-4-nano} & 1402 & --- & 1336 & --- & --- & --- & --- & --- \\
\texttt{gpt-5-mini} & 1389 & --- & 1326 & 1301 & 8.31 & --- & --- & --- \\
\texttt{gpt-5-nano} & 1337 & --- & 1250 & 866 & --- & --- & --- & --- \\
\texttt{o3} & 1431 & 0.85 & 1384 & --- & 8.39 & --- & --- & --- \\
\texttt{o3-mini} & 1347 & 0.79 & 1301 & --- & 6.15 & 0.17 & --- & 6.51 \\
\texttt{o4-mini} & 1390 & --- & 1338 & --- & 7.50 & --- & --- & --- \\
\midrule
\multicolumn{9}{@{}l}{\textit{Anthropic}} \\
\texttt{claude-3-5-haiku} & 1323 & 0.62 & 1303 & 1241 & 7.35 & 0.11 & 2.50 & 6.37 \\
\texttt{claude-3-5-sonnet} & --- & 0.78 & --- & --- & --- & --- & 2.36 & 6.92 \\
\texttt{claude-3-7-sonnet} & 1301 & 0.78 & --- & --- & --- & --- & --- & 7.12 \\
\texttt{claude-3-7-sonnet\_thinking} & 1316 & 0.78 & --- & --- & --- & --- & --- & 7.22 \\
\texttt{claude-3-haiku} & 1260 & 0.42 & 1214 & 848 & --- & 0.13 & --- & --- \\
\texttt{claude-3-opus} & --- & 0.68 & --- & --- & --- & --- & 2.67 & 6.36 \\
\texttt{claude-haiku-4-5} & 1408 & --- & 1384 & --- & --- & --- & --- & --- \\
\texttt{claude-opus-4-5} & 1468 & 0.87 & 1462 & 1769 & --- & --- & --- & --- \\
\texttt{claude-opus-4-6} & 1496 & 0.89 & 1467 & 1965 & --- & --- & --- & --- \\
\texttt{claude-sonnet-4} & 1389 & 0.84 & 1384 & 1516 & 8.09 & --- & --- & --- \\
\texttt{claude-sonnet-4-5} & 1451 & 0.87 & 1450 & 1777 & --- & --- & --- & --- \\
\texttt{claude-sonnet-4-6} & 1462 & 0.87 & 1444 & 1991 & --- & --- & --- & --- \\
\midrule
\multicolumn{9}{@{}l}{\textit{Google}} \\
\texttt{gemini-1-5-pro} & --- & 0.70 & --- & --- & --- & --- & 2.73 & 6.85 \\
\texttt{gemini-2-0-flash-001} & 1360 & 0.78 & 1346 & 1252 & 7.15 & 0.15 & 3.17 & 7.07 \\
\texttt{gemini-2-0-flash-lite-001} & 1353 & 0.72 & 1345 & --- & --- & --- & 3.20 & 6.60 \\
\texttt{gemini-2-0-pro} & --- & 0.79 & --- & --- & --- & --- & 2.64 & 7.03 \\
\texttt{gemini-2-5-flash} & 1411 & --- & 1399 & 1255 & 7.65 & --- & --- & --- \\
\texttt{gemini-2-5-pro} & 1448 & 0.86 & 1448 & 1415 & 8.38 & --- & --- & --- \\
\texttt{gemma-2-27b-it} & 1288 & 0.57 & 1291 & --- & --- & 0.16 & 3.77 & 6.65 \\
\texttt{gemma-2-2b-it} & --- & 0.16 & --- & --- & --- & --- & 4.63 & --- \\
\texttt{gemma-2-9b-it} & 1265 & 0.52 & 1257 & 920 & --- & 0.17 & 3.93 & --- \\
\texttt{gemma-3-27b-it} & 1365 & 0.68 & 1348 & 1256 & 7.99 & --- & --- & --- \\
\midrule
\multicolumn{9}{@{}l}{\textit{Meta}} \\
\texttt{llama-3-1-405b-instruct} & --- & 0.73 & --- & --- & --- & --- & 3.39 & 6.62 \\
\texttt{llama-3-1-70b-instruct} & 1293 & 0.63 & 1257 & 851 & --- & 0.18 & --- & 6.62 \\
\texttt{llama-3-1-8b-instruct} & 1211 & 0.44 & 1178 & 840 & --- & 0.19 & 3.76 & --- \\
\texttt{llama-3-2-1b-instruct} & --- & 0.12 & --- & --- & --- & --- & 2.81 & --- \\
\texttt{llama-3-2-3b-instruct} & 1166 & 0.22 & 1144 & --- & --- & 0.20 & 3.24 & --- \\
\texttt{llama-3-3-70b-instruct} & 1318 & 0.66 & 1286 & --- & --- & 0.13 & 2.87 & 6.06 \\
\texttt{llama-4-maverick} & 1327 & 0.81 & 1307 & 944 & 6.20 & --- & --- & --- \\
\texttt{llama-4-scout} & 1322 & 0.74 & 1290 & 899 & --- & --- & --- & --- \\
\midrule
\multicolumn{9}{@{}l}{\textit{Mistral}} \\
\texttt{mistral-7b-instruct-v0-1} & 1148 & 0.26 & 1104 & --- & --- & 0.19 & --- & --- \\
\texttt{mistral-large-2407} & 1313 & 0.66 & 1287 & 1400 & 6.90 & --- & --- & --- \\
\texttt{mistral-large-2411} & 1305 & 0.68 & 1276 & 1082 & 6.90 & 0.14 & --- & 6.79 \\
\texttt{mistral-nemo} & 1108 & 0.45 & 1090 & 966 & --- & 0.20 & --- & --- \\
\texttt{mistral-small-24b-instruct-2501} & 1273 & 0.66 & 1227 & --- & --- & 0.18 & --- & 6.41 \\
\midrule
\multicolumn{9}{@{}l}{\textit{Qwen}} \\
\texttt{qwen-2-5-72b-instruct} & 1302 & 0.72 & 1254 & --- & --- & 0.14 & --- & 6.62 \\
\texttt{qwen-2-5-7b-instruct} & 1090 & --- & --- & --- & --- & --- & --- & 6.47 \\
\texttt{qwen-2-5-coder-32b-instruct} & 1235 & --- & --- & --- & --- & --- & --- & 6.56 \\
\texttt{qwen-max} & 1367 & 0.76 & --- & --- & --- & --- & --- & 6.59 \\
\texttt{qwen3-14b} & --- & --- & --- & --- & --- & 0.13 & --- & --- \\
\texttt{qwen3-235b-a22b} & 1374 & 0.83 & 1324 & 1379 & 8.30 & --- & --- & --- \\
\texttt{qwen3-32b} & 1347 & --- & 1306 & --- & --- & 0.15 & --- & --- \\
\texttt{qwen3-8b} & --- & --- & --- & --- & --- & 0.12 & --- & --- \\
\texttt{qwq-32b} & 1336 & 0.69 & 1296 & 1262 & 8.02 & --- & --- & 7.06 \\
\midrule
\multicolumn{9}{@{}l}{\textit{DeepSeek}} \\
\texttt{deepseek-chat} & 1358 & 0.76 & 1349 & --- & --- & 0.14 & --- & 6.72 \\
\texttt{deepseek-chat-v3-0324} & 1395 & 0.81 & 1390 & 1473 & 7.70 & 0.14 & --- & --- \\
\texttt{deepseek-r1} & 1398 & 0.84 & 1374 & 1500 & 8.30 & --- & --- & 7.18 \\
\texttt{deepseek-r1-distill-llama-70b} & --- & --- & --- & --- & --- & --- & --- & 6.71 \\
\texttt{deepseek-r1-distill-qwen-32b} & --- & --- & --- & --- & --- & --- & --- & 6.86 \\
\midrule
\multicolumn{9}{@{}l}{\textit{Cohere}} \\
\texttt{command-a} & 1353 & --- & 1337 & 1184 & --- & --- & --- & --- \\
\texttt{command-r-08-2024} & 1249 & --- & 1209 & --- & --- & --- & 2.98 & --- \\
\texttt{command-r-plus-08-2024} & 1276 & --- & 1263 & --- & --- & 0.23 & 3.08 & --- \\
\texttt{command-r7b-12-2024} & --- & --- & --- & --- & --- & --- & 3.35 & --- \\
\midrule
\multicolumn{9}{@{}l}{\textit{NVIDIA}} \\
\texttt{llama-3-1-nemotron-70b-instruct} & 1298 & 0.63 & 1277 & --- & --- & --- & --- & --- \\
\midrule
\multicolumn{9}{@{}l}{\textit{Microsoft}} \\
\texttt{phi-4} & 1255 & 0.70 & 1210 & --- & 6.26 & 0.15 & --- & 6.72 \\
\end{longtable}
\normalsize\normalfont

\section{Greedy Algorithm for the DAT} \label{app:greedy}
The greedy algorithm that can trivially ``solve'' the DAT is given in Algorithm~\ref{alg:greedy}. Starting with a random word from a valid vocabulary set $V$ (e.g., the $\sim\!42{,}000$ single-token English nouns in WordNet $\cap$ GloVe 840B), the algorithm then proceeds to \emph{minimize the mean cosine similarity} (equivalently, \emph{maximize the mean cosine dissimilarity}, matching Equation~\eqref{eq:dat}) of all subsequent words, while avoiding any repetition.

\Cref{fig:greedy-baseline} reports the result of running the algorithm for 120 independent seeds, and scoring each sequence according to Equation~\eqref{eq:dat} under GloVe, FastText, and Sentence-BERT. The three scores are averaged per trial and across embeddings to produce the values in Figure~\ref{fig:greedy-baseline}. As shown, the algorithm trivially exceeds the distribution of LLM and human scores.

\begin{figure}[h]
\centering
\includegraphics[width=0.6\columnwidth]{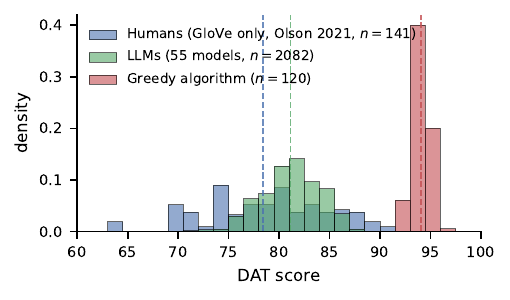}
\caption{\textbf{A simple algorithm outperforms humans and LLMs on the DAT.} Three distributions are reported in this plot: the DAT score distributions for humans (\citet{Olson2021NamingCreativity}, Study~1A, $n = 141$, GloVe-scored; mean $= 78.4$, std.\ dev.\ $= 6.4$), the distribution over our 54-model LLM pool ($n = 2078$ trials at $T = 1.0$; mean $= 83.75$, std.\ dev.\ $= 5.13$), and the results of our greedy algorithm over GloVe noun embeddings ($n = 120$; mean $= 94.1$, std.\ dev.\ $= 0.9$). Humans are on the GloVe scale only because \citet{Olson2021NamingCreativity} only reports final scores---not raw word lists. The dashed lines mark group means.}
\label{fig:greedy-baseline}
\end{figure}

\begin{algorithm}[h]
\caption{Greedy maximization algorithm for the DAT}
\label{alg:greedy}
\begin{algorithmic}[1]
\REQUIRE vocabulary $V$, embedding $E$, number of words $n$ (= 10 for the DAT)
\ENSURE word list $W = (w_1, \dots, w_n)$
\STATE $w_1 \sim \mathrm{Uniform}(V)$ \hfill // random first word
\STATE $W \leftarrow [w_1]$
\FOR{$i = 2$ \dots $n$}
    \STATE $w_i \leftarrow \arg\min_{v \in V \setminus W} \; \dfrac{1}{|W|} \sum_{u \in W} \cos\bigl(E(v),\, E(u)\bigr)$
    \STATE $W \leftarrow W \cup \{w_i\}$
\ENDFOR
\STATE \textbf{return} $W$
\end{algorithmic}
\end{algorithm}

\section{Prompts} \label{app:prompts}
Below, we report the exact prompts used to administer the DAT, CDAT, PACE, and RAT. On CDAT, the cue word varies across 50 cues drawn from semantically diverse categories (Physical world, Animals, Food \& drink, Body, Emotions, \dots); for PACE the seed word varies across 50 seeds from the same category set. We show one instantiation with cue/seed \texttt{"rock"} for illustration. The RAT uses the same fixed template across all 30 items; we show one instantiation with the stems \texttt{"cracker", "fly", "fighter"} (answer: \texttt{"fire"}).

\subsection{DAT}
\begin{plainpromptbox}
Please enter 10 words that are as different from each other as possible, in all meanings and uses of the words. Only use single nouns. Do not use proper nouns (names, places, brands). Do not use variations of the same word (e.g., don't use both `run' and `running').\\[4pt]
Respond with ONLY a JSON array of exactly 10 words, like: ["word1", "word2", "word3", "word4", "word5", "word6", "word7", "word8", "word9", "word10"]
\end{plainpromptbox}

\subsection{CDAT (cue: rock)}
\begin{plainpromptbox}
Please enter 10 words that are as different from each other as possible, in all meanings and uses of the words, yet semantically associated with the following cue word: "rock". Only use single nouns. Do not use proper nouns. Do not use the cue word itself or variations of it. Respond with ONLY a JSON array of exactly 10 words, like: ["word1", "word2", "word3", "word4", "word5", "word6", "word7", "word8", "word9", "word10"]
\end{plainpromptbox}

\subsection{PACE Stage 1 (seed: rock)}
\begin{plainpromptbox}
Starting with the word "rock", generate three different words that directly associate with this initial word only (not with each other). Please put down only single words, and do not use proper nouns (such as names, brands, etc.). For each word, provide a brief explanation of its connection to "rock". Return in JSON format:\\[4pt]
\{"results": [\{"word": "", "reason": ""\}, \{"word": "", "reason": ""\}, \{"word": "", "reason": ""\}]\}
\end{plainpromptbox}

\subsection{PACE Stage 2 (seed: rock, first-association: stone)}
\begin{plainpromptbox}
Starting with the word pair "rock" -> "stone", generate a chain of 20 words where each new word should be associated with ONLY the word immediately before it. Generate the third word based on "stone", then generate the fourth word based on your third word, and so on. Please put down only single words, and do not use proper nouns (such as names, brands, etc.). For each word, provide a brief explanation of its connection to the previous word. Return in JSON format with exactly 20 entries:\\[4pt]
\{"results": [\{"word": "stone", "reason": "<stage-1 reason>"\}, \{"word": "", "reason": ""\}, \dots\ ]\}
\end{plainpromptbox}

\subsection{RAT (stems: cracker, fly, fighter)}
\begin{plainpromptbox}
What single word can be combined with each of "cracker", "fly", and "fighter" to form a compound word or common phrase?\\[4pt]
Respond with ONLY the single answer word in lowercase. No explanation.
\end{plainpromptbox}

\section{The Divergent Remote Association Test (DRAT)}

\subsection{DRAT Utility Function Ablation} \label{app:drat-gate}
The per-word utility function $u(w;\,A)$ in \Cref{eq:drat-utility} computes the cosine similarities between $w$ and the $k$ anchors using a $\max$. Two natural alternatives to the $\max$ are $\min$ (i.e., treat the word as useful only if it is near \emph{every} anchor), and $\mathrm{avg}$. We re-score the same set of DRAT responses on the $k=4$ scientific-terms anchor bank under all three functions, recomputing the per-anchor utility threshold $\tau_A$ for each function. \Cref{tab:drat_gate_ablation} reports validity and specificity per benchmark for each of max, min, and avg.

\begin{table}[tb]
\centering
\sffamily\small
\setlength{\tabcolsep}{6pt}
\renewcommand{\arraystretch}{1.05}%
\begin{tabular}{@{}l c c c c c c@{}}
\toprule
 & \multicolumn{2}{c}{\textbf{$\max$ (used)}} & \multicolumn{2}{c}{$\min$} & \multicolumn{2}{c}{$\mathrm{avg}$} \\
\cmidrule(lr){2-3} \cmidrule(lr){4-5} \cmidrule(lr){6-7}
Benchmark    & Valid. & Spec. & Valid. & Spec. & Valid. & Spec. \\
\midrule
Arena CW     & \good{$+.56^{***}$} & $+.02$ & \good{$+.55^{***}$} & $+.03$ & \good{$+.58^{***}$} & $+.01$ \\
EQ-Bench CW  & \good{$+.55^{**}$}  & \good{$+.23^{*}$} & \good{$+.48^{*}$}  & $+.13$ & \good{$+.50^{*}$}  & $+.14$ \\
Mazur CW     & $-.18$              & $-.07$ & $-.24$              & $-.23$ & $-.08$              & $-.05$ \\
Hivemind div & \bad{$-.55^{*}$}    & $-.07$ & \bad{$-.57^{**}$}   & $-.02$ & \bad{$-.56^{**}$}   & $-.08$ \\
NovBench Util.\ & $+.11$           & $+.32$ & $+.20$              & $+.45$ & $+.16$              & $+.40$ \\
\rowhl \textbf{LiveIdeaBench}
             & \good{$+.57^{**}$}  & \good{$+.50^{*}$} & $+.32$              & $+.21$ & $+.44$              & $+.34$ \\
\bottomrule
\end{tabular}
\caption{\textbf{Ablation of the utility function.} Validity ($r$) and specificity ($r \mid g$) on the $k=4$ scientific-terms anchor bank, under three choices of the per-word utility function: $\max$ (the standard configuration), $\min$, and $\mathrm{avg}$. The scientific ideation benchmark (LiveIdeaBench) is highlighted. On LiveIdeaBench, $\max$ achieves the highest specificity by a wide margin ($+0.50^{*}$ vs $+0.21$ for $\min$ and $+0.34$ for $\mathrm{avg}$), and is the only function to reach $p<.05$ on both axes. Validity is also highest under $\max$ ($+0.57^{**}$ vs $+0.32$ and $+0.44$, with only $\max$ reaching significance). \textbf{Bold} denotes $p<.05$.}
\label{tab:drat_gate_ablation}
\end{table}

\subsection{DRAT Minimum-Survivor Ablation} \label{app:drat-nmin}
The DRAT score returns zero if fewer than $n_{\min}$ words survive the utility gate (\Cref{eq:drat}). We use $n_{\min} = 3$ as the standard setting, but here we perform an ablation of $n_{\min} \in \{2, 3, 4, 5, 6\}$.  \Cref{tab:drat_nmin_ablation} summarises the results. Overall, any value $n_{\min} \le 5$ tends to perform well.

\begin{table}[h]
\centering
\sffamily\small
\setlength{\tabcolsep}{8pt}
\renewcommand{\arraystretch}{1.05}%
\begin{tabular}{@{}c c c@{}}
\toprule
$n_{\min}$ & Valid. & Spec. \\
\midrule
2          & \good{$+.46^{*}$}  & $+.40$\\
\rowhl 3 (used) & \good{$+.57^{**}$} & \good{$+.50^{*}$}\\
4          & \good{$+.58^{**}$} & \good{$+.48^{*}$}\\
5          & \good{$+.49^{*}$}  & \good{$+.44^{*}$}\\
6          & $+.31$             & $+.31$\\
\bottomrule
\end{tabular}
\caption{\textbf{Ablation of the minimum-survivor threshold $n_{\min}$ for predicting scientific ideation.} Overall, any value $n_{\min} \le 5$ tends to perform well. \textbf{Bold} denotes $p < .05$, with stars $^* p < 0.05, ^{**} p < 0.01, ^{***} p < 0.001$.}
\label{tab:drat_nmin_ablation}
\end{table}

\subsection{DRAT Anchor Banks} \label{app:anchor_banks}
Below, we give the two anchor vocabulary corpora used in the DRAT vocabulary-corpus ablation (\Cref{fig:drat_ablation}). Each bank contains 30 anchor sets with $k=4$ words each. The $k=2$ and $k=3$ banks are the first 2 and 3 anchors of each quadruple, respectively.

\noindent\begin{minipage}{\linewidth}
\paragraph{Scientific-terms bank.} Hand-curated quadruples drawn from concrete English nouns spanning multiple scientific divisions (biology / physics / engineering / social).
\par\smallskip
\begingroup\centering\small
\begin{tabular}{r l l l l}
\toprule
 & Anchor 1 & Anchor 2 & Anchor 3 & Anchor 4 \\
\midrule
1  & heart           & engine        & marketplace      & equation \\
2  & immune system   & fire          & organization     & theorem \\
3  & evolution       & friction      & election         & polynomial \\
4  & genome          & lattice       & algorithm        & court \\
5  & heartbeat       & wave          & oscillator       & function \\
6  & neuron          & particle      & graph            & ritual \\
7  & lung            & turbulence    & turbine          & matrix \\
8  & immune system   & friction      & supply chain     & axiom \\
9  & cell            & crystal       & factory          & treaty \\
10 & genome          & contract      & proof            & pipeline \\
11 & evolution       & marketplace   & algorithm        & motor \\
12 & ecosystem       & neighborhood  & graph            & pressure \\
13 & immune system   & marketplace   & supply chain     & distribution \\
14 & ant colony      & organization  & factory          & vertex \\
15 & heart           & parliament    & machine          & set \\
16 & genome          & algorithm     & factory          & festival \\
17 & heartbeat       & oscillator    & pipeline         & topology \\
18 & neuron          & graph         & circuit          & senate \\
19 & phase transition & revolution   & function         & heartbeat \\
20 & entropy         & hierarchy     & information      & cable \\
21 & gravity         & hierarchy     & lattice          & rocket \\
22 & turbulence      & traffic       & pipeline         & integral \\
23 & heat            & marketplace   & refrigerator     & organ \\
24 & gravity         & hierarchy     & machine          & embryo \\
25 & entropy         & information   & circuit          & virus \\
26 & wave            & function      & antenna          & rumor \\
27 & lattice         & matrix        & blueprint        & ecosystem \\
28 & democracy       & voting algorithm & pipeline      & hive \\
29 & contract        & proof         & blueprint        & leaf \\
30 & rumor           & broadcast     & transmission     & parasite \\
\bottomrule
\end{tabular}\par
\endgroup
\end{minipage}

\bigskip

\noindent\begin{minipage}{\linewidth}
\paragraph{ConceptNet relation-distant bank.} Anchors sampled from a $\sim$200-noun candidate pool by greedy random restart, accepting only quadruples whose every pairwise cosine in ConceptNet Numberbatch~\citep{numberbatch} is below $\tau = 0.20$.
\par\smallskip
\begingroup\centering\small
\begin{tabular}{r l l l l}
\toprule
 & Anchor 1 & Anchor 2 & Anchor 3 & Anchor 4 \\
\midrule
1  & constitution & bark         & whale         & neighborhood \\
2  & branch       & blood        & embryo        & refrigerator \\
3  & dream        & knife        & rabbit        & tribe \\
4  & canyon       & vein         & hope          & enzyme \\
5  & bee          & robot        & festival      & axiom \\
6  & rumor        & wind         & pipeline      & petal \\
7  & rice         & particle     & alliance      & ecosystem \\
8  & decay        & fern         & vein          & wind \\
9  & information  & decay        & wind          & hammer \\
10 & scale        & antenna      & gravity       & wine \\
11 & scale        & artery       & pollen        & supply chain \\
12 & spoon        & lens         & hammer        & forest \\
13 & battery      & glacier      & dawn          & joint \\
14 & neighborhood & constitution & hope          & joint \\
15 & moss         & neuron       & crystal       & metric \\
16 & thread       & ritual       & dam           & eagle \\
17 & crystal      & blood        & feather       & function \\
18 & tea          & broadcast    & desert        & microscope \\
19 & circuit      & election     & vacuum        & ice \\
20 & gas          & embryo       & village       & lung \\
21 & atom         & fog          & whale         & hammer \\
22 & ant          & hope         & voting algorithm & information \\
23 & ant colony   & moss         & topology      & thunder \\
24 & wine         & bread        & graph         & feather \\
25 & telegram     & cell         & marketplace   & hormone \\
26 & crystal      & mountain     & immune system & salmon \\
27 & wave         & supply chain & chaos         & engine \\
28 & silence      & function     & hope          & leaf \\
29 & cavern       & bacterium    & forest        & memory \\
30 & theorem      & fog          & chaos         & transmission \\
\bottomrule
\end{tabular}\par
\endgroup
\end{minipage}

\end{document}